\documentclass{article}
\PassOptionsToPackage{numbers, sort}{natbib}

 \usepackage[preprint]{neurips_2026}

\usepackage[utf8]{inputenc} 
\usepackage[T1]{fontenc}    
\usepackage{hyperref}       
\usepackage{url}            
\usepackage{booktabs}       
\usepackage{amsfonts}       
\usepackage{nicefrac}       
\usepackage{microtype}      
\usepackage{xcolor}         

\usepackage[table]{xcolor}
\usepackage{multirow}
\usepackage[most]{tcolorbox}
\usepackage{listings}
\usepackage{amsmath}
\usepackage{enumitem}
\usepackage{multicol}
\usepackage{graphicx}
\usepackage{subcaption}
\usepackage{float}
\usepackage{stfloats}
\usepackage{wrapfig}
\usepackage{geometry}
\usepackage{pifont}
\usepackage{upquote}
\usepackage{amssymb}
\usepackage{amsthm}
\usepackage{tabularx}
\usepackage{makecell}
\usepackage{algorithm}
\usepackage{algpseudocode}

\title{Persistent Visual Memory: Sustaining\\Perception for Deep Generation in LVLMs}

\author{%
Siyuan Huang$^{1,2}$, Xiaoye Qu$^{1}$\protect\setcounter{footnote}{1}\thanks{Corresponding authors.}  , Yafu Li$^{3}$, Tong Zhu$^{1}$, Zefeng He$^{4}$\\\textbf{Muxin Fu$^{5}$, Daizong Liu$^{6}$, Wei-Long Zheng$^{2}$, Yu Cheng$^{3\dagger}$} \\
$^1$Shanghai AI Laboratory,
$^2$Shanghai Jiao Tong University,\\
$^3$The Chinese University of Hong Kong,
$^4$Nanjing University,\\
$^5$Tongji University,
$^6$Wuhan University
}

\theoremstyle{plain}
\newtheorem{theorem}{Theorem}[section]

\theoremstyle{definition}

\theoremstyle{remark}
\newtheorem{remark}[theorem]{Remark}

\begin{document}

\maketitle

\begin{abstract}
While autoregressive Large Vision-Language Models (LVLMs) demonstrate remarkable proficiency in multimodal tasks, they face a ``Visual Signal Dilution'' phenomenon, where the accumulation of textual history expands the attention partition function, causing visual attention to decay inversely with generated sequence length.
To counteract this, we propose \textbf{P}ersistent \textbf{V}isual \textbf{M}emory (\textbf{PVM}), a lightweight learnable module designed to strengthen sustained, on-demand
access to visual evidence.
Integrated as a parallel branch alongside the Feed-Forward Network (FFN) in LVLMs, PVM establishes a distance-agnostic retrieval pathway that directly provides visual embeddings for enhanced visual perception, thereby structurally mitigating the signal suppression inherent to deep generation.
Extensive experiments on Qwen3-VL models demonstrate that PVM brings notable improvements with negligible parameter overhead, delivering consistent average accuracy gains across both 4B and 8B scales, particularly in complex reasoning tasks that demand persistent visual perception.
Furthermore, in-depth analysis reveals that PVM shows improved robustness in longer generations and accelerates internal prediction convergence.
Our code is available at \url{https://github.com/huaixuheqing/PVM}.
\end{abstract}
\section{Introduction}
\label{sec:intro}

The rapid evolution of Large Vision-Language Models (LVLMs)~\cite{comanici2025gemini, an2025llava, bai2025qwen3vltechnicalreport, hong2025glm, team2025kimi, wang2025internvl3} has established a dominant paradigm in multimodal intelligence.  
By bridging vision encoders with Large Language Models (LLMs) via semantic projection, models like the Qwen3-VL~\cite{bai2025qwen3vltechnicalreport} series have achieved remarkable proficiency across diverse tasks, ranging from fine-grained visual perception to complex reasoning~\cite{li2024survey, su2025thinking}. This modeling approach allows visual tokens to be processed seamlessly alongside textual sequences, effectively extending the cognitive scope of LLMs to encompass visual perception. As these capabilities mature, the research frontier is increasingly shifting towards deep generation scenarios, where models are expected to sustain high-level understanding and interaction across extended dialogues and multi-step deductive chains~\cite{dong2025insight, sun2025mitigating, shen2025long}.

However, this expansion into extended contexts faces a critical Visual Signal Dilution. 
In the standard autoregressive framework, visual tokens serve as fixed precursors~\cite{liu2023visual, bai2025qwen3vltechnicalreport, liu2024improved, xiong2024autoregressive}; unlike text which naturally self-replenishes during generation, they cannot be intrinsically regenerated. 
Our analysis reveals that the attention mechanism inherently creates a structural conflict:
as textual history accumulates, the normalization induced by attention over an ever-growing context redistributes probability mass across more tokens~\cite{vasylenko2025long, liusieve}, causing the once-injected visual signals to be progressively attenuated.
This process drives the model through a phase of asymptotic decay into a Low-Attention Equilibrium, where visual cues are outweighed by textual priors by orders of magnitude~\cite{li2023evaluating, liu2026vision, hu2025enhancing}. Consequently, maintaining fidelity requires a shift from passive retention to sustained, on-demand perception, enabling the model to dynamically re-examine visual memory~\cite{zhou2025learning, long2025seeing, liu2025memverse}.

\begin{figure}[!tbp]
    \centering
    \includegraphics[width=0.9\linewidth]{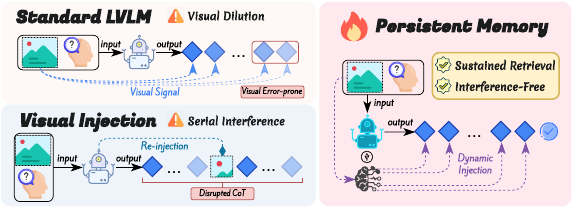}
    \caption{\textbf{Visual Memory Mechanisms.} Unlike Standard LVLMs that degrade via \textit{visual dilution} or Injection methods that cause \textit{serial interference}, our \textbf{P}ersistent \textbf{V}isual \textbf{M}emory (\textbf{PVM}) establishes an independent retrieval path, preserving visual intensity without disrupting the autoregressive flow.}
    \label{fig:intro}
    \vspace{-1mm}
\end{figure}

To counteract this decay, prior works have attempted to explicitly re-inject visual information, a paradigm we characterize as Visual Injection in Figure~\ref{fig:intro}. Whether by inserting raw tokens~\cite{gao2025interleaved, wang2025vgr} or fusing processed features~\cite{yu2025vismem, xing2025boosting}, these approaches typically inject visual retrieval signals directly into the serial autoregressive pathway. While effective for short-term recall, they can disrupt linguistic coherence. By perturbing the evolving semantic states used for step-by-step reasoning, such interventions create a trade-off: reinforcing visual presence comes at the cost of disturbing the precise logical states required for complex deduction~\cite{wang2025vptracker, tian2025large}.

We bridge this gap by proposing \textbf{P}ersistent \textbf{V}isual \textbf{M}emory (\textbf{PVM}), a lightweight learnable module designed to provide sustained visual perception within the autoregressive flow (as shown in Figure~\ref{fig:intro}). 
Structurally, PVM is integrated alongside the Feed-Forward Network (FFN) in the Transformer block, effectively bifurcating the generation flow: while the original FFN preserves the model's reasoning logic, the parallel PVM branch serves as a dedicated channel for retrieving raw visual evidence~\cite{zou2024look, geva2021transformer}.
By employing a gated cross-attention mechanism that attends exclusively to the original visual embeddings, PVM establishes a distance-agnostic retrieval pathway. 
This design structurally mitigates visual signal suppression from the growing textual history, helping the model maintain access to high-fidelity image details throughout extended contexts.

To validate the effectiveness of our proposed PVM, we conduct extensive experiments across a suite of eight challenging multimodal benchmarks, covering general understanding and scientific reasoning. Based on the Qwen3-VL series, our 8B variant with PVM achieves a remarkable 4.8\% average accuracy improvement over the baseline, while demonstrating superior performance among open-source models. 
Notably, our PVM also delivers a consistent 4.4\% improvement when applied to the 4B architecture.
Furthermore, PVM shows improved robustness in extended generation, and mechanistic analysis via LogitLens~\cite{nostalgebraist2020logitlens, cheng2026conditional} suggests that it accelerates internal prediction convergence rather than merely increasing model capacity.

To sum up, our main contributions are threefold:
\begin{itemize}
    \item We formally analyze and empirically validate the phenomenon of {Visual Signal Dilution}, mathematically deriving the asymptotic decay of visual attention and
    uncovering the structural conflict that reveals the necessity for a sustained perception paradigm.
    \item We propose PVM, a lightweight parallel branch that enables sustained, distance-agnostic visual retrieval. We theoretically characterize how this architecture structurally mitigates visual signal suppression and improves access to visual evidence in deep generation.
    \item We achieve substantial gains across eight benchmarks, demonstrating robust scalability on both 4B ({+4.4\%}) and 8B ({+4.8\%}) backbones.
    Further analyses suggest that PVM is more beneficial in longer generations and accelerates internal prediction convergence.
\end{itemize}
\section{Related Work}

\paragraph{General LVLMs and Challenges in Visual Persistence.}
The rapid evolution of Large Language Models (LLMs)~\cite{brown2020language, achiam2023gpt, grattafiori2024llama, yang2025qwen3} has catalyzed the development of Large Vision-Language Models (LVLMs). By bridging pre-trained visual encoders~\cite{radford2021learning, tschannen2025siglip} with LLMs via learnable interfaces, ranging from linear layers~\cite{chen2024internvl, liu2023visual, zhu2023minigpt} and Q-Formers~\cite{li2023blip, bai2023qwen, dai2023instructblip} to more advanced variants~\cite{wang2025internvl3, bai2025qwen3vltechnicalreport}, these models have achieved remarkable proficiency across diverse tasks, including visual perception~\cite{fu2025mmecomprehensiveevaluationbenchmark, liu2024mmbench}, complex reasoning~\cite{qiao2025we, zhang2024mathverse, huang2025spotlight, he2025framethinker, wang2025sampling, he2025diffthinker, he2025videossr}, and open-ended dialogue~\cite{wang2025characterbox, agrawal2024pixtral}. However, despite these advancements, ensuring faithfulness in deep generation remains a formidable challenge; as generation extends into longer responses, LVLMs exhibit a pronounced susceptibility to hallucinations~\cite{bai2024hallucination, liu2024survey}, where generated content diverges from visual facts. 
Rather than being a mere training artifact, this degradation is increasingly recognized as a fundamental architectural bottleneck: as the autoregressive history grows, the static visual tokens located at the sequence start are statistically drowned out by the expanding textual priors~\cite{li2023evaluating, liu2026vision, hu2025enhancing}, creating an urgent need for mechanisms that can sustain visual perception over long horizons.

\paragraph{Visual Injection and Context Management.}
Addressing the volatility of visual signals in extended contexts, recent research has diverged into three primary streams. Hierarchical systems and retrieval-augmented frameworks seek to decouple storage from processing through dual-stream architectures or recurrent bridges~\cite{yu2025vismem, han2025contextuallvlm, lu2025vidove, liu2025comemo, wang2024videollamb, balavzevic2024memory, yuan2025memory, fu2026latentmem, he2026gems}. To manage computational constraints, visual token compression and bottleneck mechanisms reduce feature granularity via resampling~\cite{bulat2025fwd2bot, feng2025vision, shen2025context}, often extended to continual learning scenarios~\cite{torne2025learning}. Additionally, attention optimization strategies employ sparse matrices and decay resilience techniques to mitigate long-range hallucination~\cite{lu2025mdsam, gao2025remember, wu2025chainmpq}. However, these approaches often necessitate complex external modules, structurally alter the backbone, or inevitably sacrifice fine-grained visual fidelity. In contrast, we propose PVM, a parallel retrieval mechanism that preserves original visual features without disrupting the autoregressive flow.
\section{Analysis of Visual Signal Dilution}
\label{sec:problem_analysis}

In this section, we analyze the architectural bottleneck in autoregressive LVLMs. We provide a theoretical framework describing how standard attention inherently dilutes visual signals in Section~\ref{subsec:theory}, and empirically validate these dynamics to offer insights for our method design in Section~\ref{subsec:empirical}.

\subsection{Theoretical Formulation}
\label{subsec:theory}

Consider an autoregressive LVLM generating a new token. The input context is partitioned into a fixed set of visual tokens $\mathcal{V}$ ($|\mathcal{V}|=M$) and a dynamically growing set of textual history tokens $\mathcal{T}_t$ ($|\mathcal{T}_t|=t$).
For the current query vector $\mathbf{q}_t$ in a Transformer block~\cite{vaswani2017attention}, let $s_k(\mathbf{q}_t) = (\mathbf{q}_t^\top \mathbf{k}_k)/\sqrt{d}$ be the unnormalized attention score assigned to the $k$-th context token, where $\mathbf{k}_k$ is the key vector and $d$ is the head dimension. 
To dissect the attention mechanism, we decompose the partition function (the Softmax denominator) into the aggregate unnormalized mass of the visual context $Z_{\mathcal{V}}$ and the textual history $Z_{\mathcal{T}}$: $Z_{\mathcal{V}}(\mathbf{q}_t) = \sum_{k \in \mathcal{V}} \exp(s_k(\mathbf{q}_t)), \, Z_{\mathcal{T}}(\mathbf{q}_t, t) = \sum_{k \in \mathcal{T}_t} \exp(s_k(\mathbf{q}_t))$. We explicitly parameterize $Z_{\mathcal{T}}$ with $t$ to denote the underlying cardinality growth of the summation over time.

In standard self-attention, the probability weight $\alpha_{t,k}$ for the $k$-th token is given by $\exp(s_k(\mathbf{q}_t))/(Z_{\mathcal{V}}(\mathbf{q}_t) + Z_{\mathcal{T}}(\mathbf{q}_t, t))$. We define the \textbf{Visual Attention Mass}, $\Omega_{\mathcal{V}}(t)$, as the aggregate weight allocated to the visual set:
\begin{equation}
\label{eq:standard_attn}
\Omega_{\mathcal{V}}(t) := \sum_{k \in \mathcal{V}} \alpha_{t,k} = \frac{\sum_{k \in \mathcal{V}}\exp(s_k(\mathbf{q}_t))}{Z_{\mathcal{V}}(\mathbf{q}_t) + Z_{\mathcal{T}}(\mathbf{q}_t, t)} = \frac{Z_{\mathcal{V}}(\mathbf{q}_t)}{Z_{\mathcal{V}}(\mathbf{q}_t) + Z_{\mathcal{T}}(\mathbf{q}_t, t)}
\end{equation}
While $Z_{\mathcal{V}}(\mathbf{q}_t)$ is bounded by the fixed number of visual tokens $M$, $Z_{\mathcal{T}}(\mathbf{q}_t, t)$ aggregates over a growing textual history $t$. We characterize the asymptotic behavior of $\Omega_{\mathcal{V}}(t)$ through two distinct operational phases: the \textit{Power-Law Dilution Phase} and the \textit{High-Magnitude Saturation Phase}.

\subsubsection{Phase I: Power-Law Dilution via Active Competition}
\label{subsubsec:phase1}

In the early-to-mid stage of generation, the model must continuously reference historical context to maintain coherence. We posit a \textit{Persistent textual attention} condition: historical text tokens maintain a statistically significant relevance, bounded below by a positive constant $\mu > 0$ such that the average unnormalized attention mass satisfies $t^{-1} Z_{\mathcal{T}}(\mathbf{q}_t, t) \ge \mu$.

However, unlike the growing textual history, the visual information is constrained by the fixed number of visual tokens $M$. In practice, input normalization constrains the pre-softmax scores, effectively imposing an upper bound $s_{\max}$. Consequently, the aggregate visual mass $Z_{\mathcal{V}}(\mathbf{q}_t)$ is upper-bounded by $\beta = M \cdot \exp(s_{\max})$. Based on this structural conflict, we formulate the dilution theorem:
\begin{theorem}[\textbf{Visual Signal Dilution}]
\label{thm:dilution}
Given fixed visual context size $M$ and textual attention lower bound $\mu$, the visual attention mass $\Omega_{\mathcal{V}}(t)$ is structurally dominated by the growing textual history:
\begin{equation}
    \Omega_{\mathcal{V}}(t) \le \frac{\beta}{\beta + \mu \cdot t} = \mathcal{O}(t^{-1})
\end{equation}
\end{theorem}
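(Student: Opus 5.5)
The plan is a direct monotonicity argument on the closed form of $\Omega_{\mathcal{V}}(t)$ in Eq.~\eqref{eq:standard_attn}. Write $x = Z_{\mathcal{V}}(\mathbf{q}_t) > 0$ and $y = Z_{\mathcal{T}}(\mathbf{q}_t, t) > 0$, so that $\Omega_{\mathcal{V}}(t) = f(x,y) := x/(x+y)$. The whole proof reduces to two observations about $f$ on the positive quadrant. First, $f$ is nondecreasing in $x$, since $\partial_x f = y/(x+y)^2 \ge 0$. Second, $f$ is nonincreasing in $y$, since $\partial_y f = -x/(x+y)^2 \le 0$. Equivalently, and without calculus, $f(x,y) = 1/(1 + y/x)$, and $y/x$ decreases in $x$ and increases in $y$.

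With these in hand, I would proceed in two substitution steps.

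\emph{Visual mass.} Every visual score satisfies $s_k(\mathbf{q}_t) \le s_{\max}$, and $\exp$ is increasing. Summing over the $M$ visual tokens gives $x \le M \exp(s_{\max}) = \beta$. By monotonicity in $x$, $f(x,y) \le f(\beta, y)$.

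\emph{Textual mass.} The persistent textual attention condition gives $y \ge \mu t$. By monotonicity in $y$, $f(\beta, y) \le f(\beta, \mu t) = \beta/(\beta + \mu t)$. This is the claimed inequality.

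For the asymptotic statement, $\beta/(\beta+\mu t) \le \beta/(\mu t)$ holds for all $t \ge 1$. Since $\beta$ and $\mu$ do not depend on $t$, this is $\mathcal{O}(t^{-1})$ with constant $\beta/\mu$.

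There is essentially no hard step here; the only point needing care is the order of the two bounds. Because $f$ is monotone in each argument separately, applying the two bounds sequentially, rather than simultaneously, is legitimate. I would also note explicitly that $x > 0$, so $f$ is well defined, and $\mu > 0$, so the constant in the $\mathcal{O}$ is finite. Finally, the result is only as strong as the modelling assumptions, namely the uniform score cap $s_{\max}$ and the per-token lower bound $\mu$. Both must hold uniformly in $t$ for the $\mathcal{O}(t^{-1})$ constant to be genuinely $t$-independent, and I would state this in a remark rather than try to derive it.
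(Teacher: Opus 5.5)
Your proposal is correct and follows the paper's proof: first substitute $Z_{\mathcal{V}} \le \beta$, then $Z_{\mathcal{T}} \ge \mu t$, into $Z_{\mathcal{V}}/(Z_{\mathcal{V}}+Z_{\mathcal{T}})$, exactly as the paper chains its two inequalities. Your explicit monotonicity of $x/(x+y)$ in each argument supplies the justification the paper leaves unstated, and your constant $\beta/\mu$ makes precise the paper's informal ``linear term dominates'' step for the $\mathcal{O}(t^{-1})$ claim.
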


\begin{proof}
Substituting the visual upper bound $Z_{\mathcal{V}} \le \beta$ and the textual lower bound $Z_{\mathcal{T}} \ge \mu t$ into Eq.~\ref{eq:standard_attn} yields the governing inequality: $\Omega_{\mathcal{V}}(t) = \frac{Z_{\mathcal{V}}}{Z_{\mathcal{V}} + Z_{\mathcal{T}}} \le \frac{\beta}{\beta + Z_{\mathcal{T}}} \le \frac{\beta}{\beta + \mu \cdot t}$.
As $t$ grows, the linear term $\mu \cdot t$ in the denominator dominates the constant $\beta$, implying $\Omega_{\mathcal{V}}(t)=\mathcal{O}(t^{-1})$.
\end{proof}

\subsubsection{Phase II: The High-Magnitude Saturation Trap}
\label{subsubsec:phase2}

In practice, $Z_{\mathcal{T}}$ eventually saturates due to effective attention limits. However, we argue that this saturation does not rescue the visual signal; rather, it locks the model into a Low-Attention Equilibrium.
Let $W_{eff}$ denote the effective textual window size. Since the effective textual window is typically orders of magnitude larger than the visual context ($W_{eff} \gg M$), the aggregate textual mass stabilizes at a high plateau $Z_{\mathcal{T}}^{sat}$. Thus, the system converges to: $\lim_{t \to \infty} \Omega_{\mathcal{V}}(t) \approx \frac{\mathbb{E}[Z_{\mathcal{V}}]}{Z_{\mathcal{T}}^{sat}} \ll 1$.

Although the dilution halts, the system stabilizes with a heavily skewed distribution. The visual signal remains structurally outweighed by the overwhelming textual momentum, making the model more prone to errors when precise visual grounding is required.

\subsection{Empirical Verification}
\label{subsec:empirical}

To validate our theory, we analyze attention dynamics using Qwen3-VL-8B-Instruct~\cite{bai2025qwen3vltechnicalreport} on a subset of the COCO 2017~\cite{lin2014microsoft} validation set. We designed a ``Blind Painter'' Stress Test (see Appendix~\ref{sec:appendix_prompt}). Specifically, we prompt the model to reconstruct the image textually with extreme detail, as if instructing a painter who cannot see the original work. This setup necessitates active visual retrieval at every generation step rather than just the beginning, creating a rigorous testbed that isolates signal dilution from general language generation.

\begin{figure}[!htbp]
    \centering
    \begin{minipage}[t]{0.48\textwidth}
        \centering
        \includegraphics[height=3.4cm, keepaspectratio]{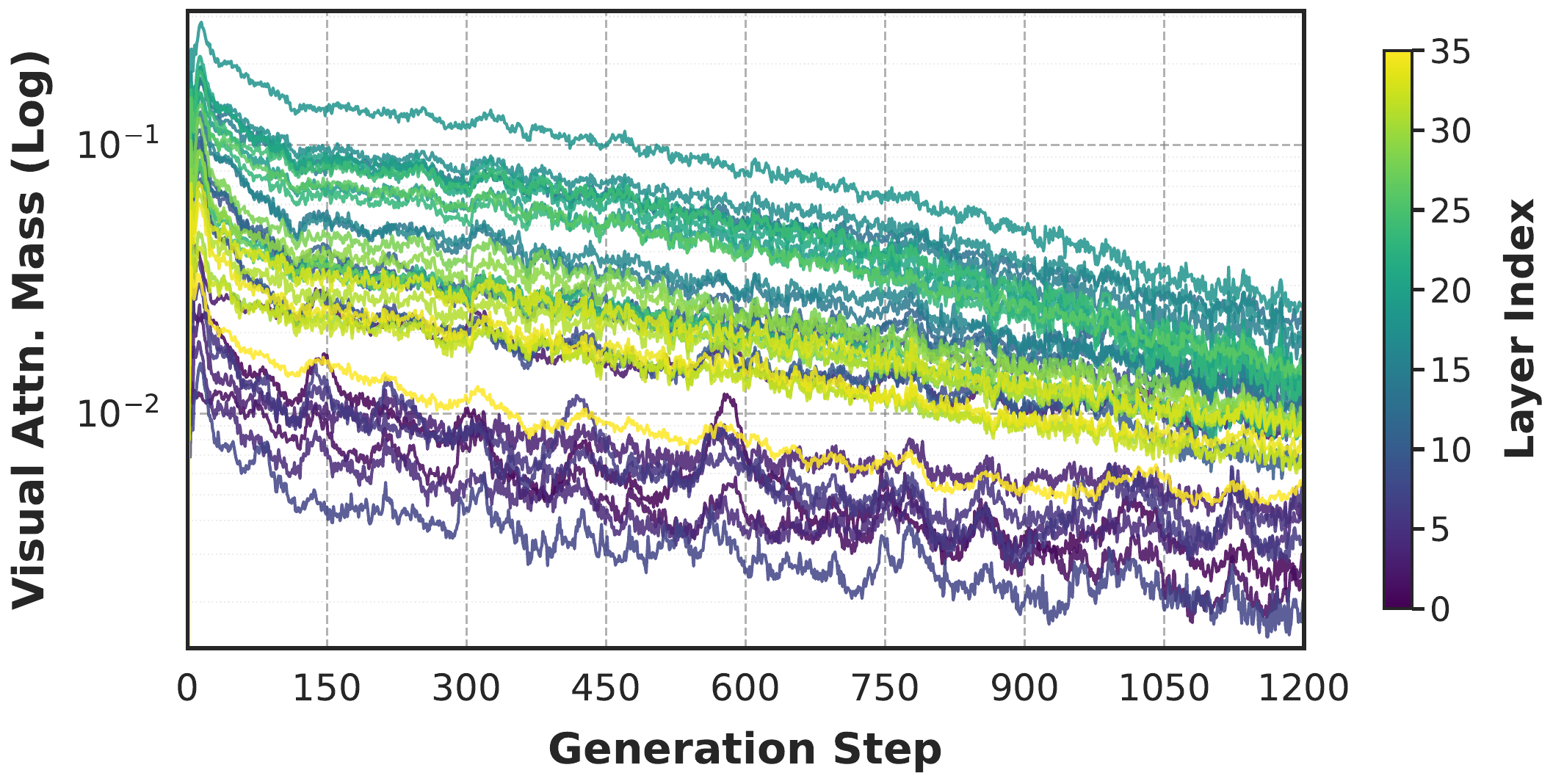}
        \caption{\textbf{Power-Law Decay of Visual Signal.} 
        Log-scale analysis shows that $\Omega_{\mathcal{V}}$ approximately follows the $\mathcal{O}(t^{-1})$ trend predicted by Theorem~\ref{thm:dilution}. 
        This demonstrates that visual attention mass is structurally diluted, decaying inversely to the sequence length $t$.}
        \label{fig:decay_trend}
    \end{minipage}\hfill 
    \begin{minipage}[t]{0.48\textwidth}
        \centering
        \includegraphics[height=3.4cm, keepaspectratio]{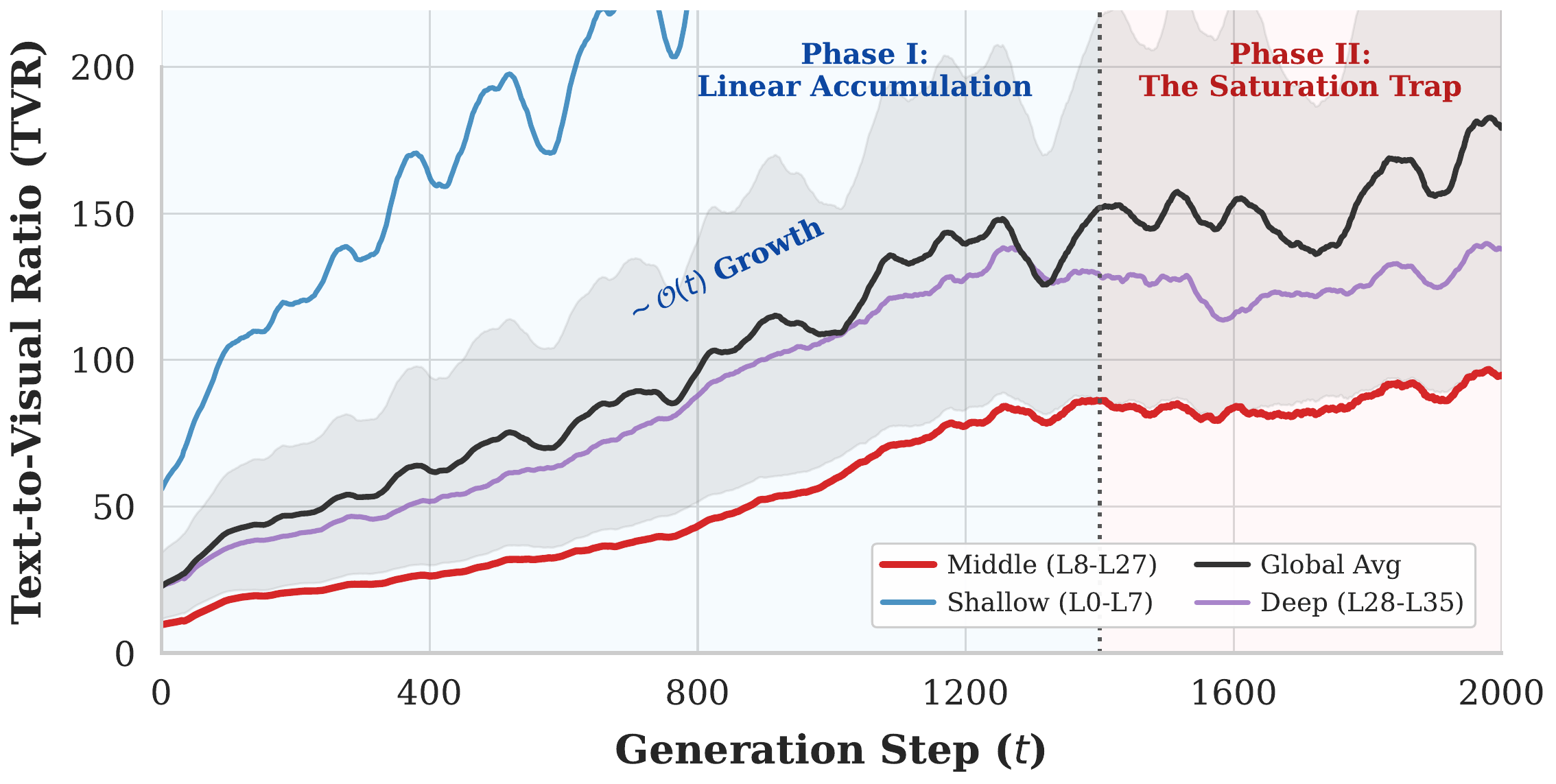}
        \caption{\textbf{Evolution of Textual Dominance.} 
        The TVR trajectory is consistent with the proposed two-phase dilution mechanism.
        This reveals how initial probability competition saturates into a strict equilibrium, where textual priors structurally overwhelm visual signals.}
        \label{fig:tvr_curve}
    \end{minipage}
\end{figure}

\subsubsection{The Phenomenon: Spatiotemporal Decay of Visual Mass}

We track the Visual Attention Mass $\Omega_{\mathcal{V}}$ throughout the generation process. As shown in the logarithmic analysis (Figure~\ref{fig:decay_trend}), the empirical decay trajectory closely aligns with the $\mathcal{O}(t^{-1})$ power-law predicted by Theorem~\ref{thm:dilution}. 
Crucially, layer-wise analysis reveals that this attenuation is pronounced in the intermediate layers (Layers 8--27), which serve as the primary locus for multimodal reasoning (detailed in Appendix~\ref{app:visual_attention}). Thus, even though these layers retain the highest relative visual relevance, they are forced to operate on a progressively weakening visual foundation as the text expands, motivating a mechanism for active, on-demand visual retrieval.

\definecolor{myblue}{rgb}{0.21,0.49,0.74}
\begin{tcolorbox}[colframe=black!50, colback=myblue!8, boxrule=1.5pt, arc=2mm, top=4pt, bottom=4pt, left=4pt, right=4pt,  boxsep=1pt]
\textit{\textbf{Insight 1}: Since the autoregressive stream tends to dilute visual signals, we should establish a parallel memory pathway that is structurally isolated from the main reasoning backbone.}
\end{tcolorbox}

\subsubsection{The Mechanism: Dynamics of Textual Dominance}

To elucidate the mathematical driver of attenuation, we analyze the Text-to-Visual Ratio (TVR): $\text{TVR}(t) = Z_{\mathcal{T}}(\mathbf{q}_t, t) / Z_{\mathcal{V}}(\mathbf{q}_t)$. Figure~\ref{fig:tvr_curve} validates the two-phase dynamics: the trajectory initially exhibits linear growth (\textit{Phase I}), suggesting that historical text acts as an active competitor for probability mass. Crucially, it eventually stabilizes at a saturation plateau (\textit{Phase II}) where the textual mass exceeds the visual mass by nearly two orders of magnitude. 
This quantifies the Low-Attention Equilibrium: at such high ratios, the visual signal $Z_{\mathcal{V}}$ is statistically insignificant in the shared partition function, limiting the model's ability to maintain strict visual fidelity in deep generation.

\begin{tcolorbox}[colframe=black!50, colback=myblue!8, boxrule=1.5pt, arc=2mm, top=4pt, bottom=4pt, left=4pt, right=4pt,  boxsep=1pt]
\textit{\textbf{Insight 2}: To prevent textual dominance in the partition function, the retrieval mechanism requires independent attention normalization, confined entirely to the closed visual domain.}
\end{tcolorbox}
\section{Method: Persistent Visual Memory}
\label{sec:method}

Guided by the theoretical analysis and insights in Section~\ref{sec:problem_analysis}, we propose \textbf{P}ersistent \textbf{V}isual \textbf{M}emory (\textbf{PVM}). As shown in Figure~\ref{fig:PVM}, this framework operationalizes the insights derived from the dilution bottleneck: it satisfies the need for a \textit{parallel memory pathway} and enforces \textit{independent attention normalization}.
In Section~\ref{sec:pvm_arch}, we detail the parallel architecture and its gated fusion mechanism. Subsequently, in Section~\ref{sec:guarantee}, we provide a theoretical analysis showing that PVM structurally mitigates the impact of growing textual context on visual retrieval.

\subsection{Architecture Design}
\label{sec:pvm_arch}

Unlike previous approaches relying on external retrieval~\cite{yasunaga2022retrieval, caffagni2024wiki, mei2025survey}, we integrate PVM into the Transformer decoder block as a parallel branch. 
Specifically, inspired by MemVR~\cite{zou2024look} and the understanding of Feed-Forward Networks as key-value memories~\cite{geva2021transformer}, 
we position the PVM parallel to the FFN sub-layer to function as an active perception channel for the current visual context.

Let $\mathbf{x} \in \mathbb{R}^{d}$ denote the hidden state output from the Multi-Head Self-Attention (MHSA) layer. 
Departing from the standard serial architecture where $\mathbf{x}$ is fed exclusively into the FFN, our design bifurcates the information flow into two parallel streams:
\begin{itemize}[leftmargin=*]
    \item \textbf{The Reasoning Path (Original):} The hidden state travels through the frozen FFN to access pre-trained static knowledge and logical patterns: $\mathbf{h}_{\mathrm{ffn}} = \mathrm{FFN}(\mathbf{x})$.
    \item \textbf{The Looking Path (PVM):} Simultaneously, the same state $\mathbf{x}$ acts as the \textit{Query} for the PVM module. This establishes the \textit{parallel memory pathway} that is structurally isolated from the autoregressive bottleneck, allowing the model to actively retrieve visual details.
\end{itemize}

\paragraph{PVM Computation.}
To ensure parameter efficiency and minimize inference overhead, PVM serves as a bottleneck adapter in a projected latent space rather than the full model dimension $d$.
Let $\mathbf{x} \in \mathbb{R}^{d}$ be the input hidden state and $\mathbf{V}_{\mathrm{img}} \in \mathbb{R}^{M \times d}$ represent the features of the fixed visual set $\mathcal{V}$. The computation proceeds in three stages:
\begin{enumerate}[leftmargin=*]
    \item \textbf{Projection:} We project the input hidden state and visual features into a lower-dimensional latent space $d' < d$ via two independent learnable reducers, $\mathbf{W}_{\mathrm{down}}^{\mathrm{txt}}$ and $\mathbf{W}_{\mathrm{down}}^{\mathrm{vis}} \in \mathbb{R}^{d \times d'}$: $\mathbf{x}_{\mathrm{lat}} = \mathbf{x} \mathbf{W}_{\mathrm{down}}^{\mathrm{txt}}, \quad \mathbf{V}_{\mathrm{lat}} = \mathbf{V}_{\mathrm{img}} \mathbf{W}_{\mathrm{down}}^{\mathrm{vis}}$.
    
    \item \textbf{Latent Retrieval:} We perform Cross-Attention where the projected hidden state serves as \textit{Query} and the projected visual features $\mathbf{V}_{\mathrm{lat}}$ serve as both \textit{Keys} and \textit{Values}, followed by a lightweight FFN.
    Crucially, this restricts the attention domain solely to the visual set $\mathcal{V}$, realizing the \textit{independent attention normalization}: $\mathbf{h}_{\mathrm{attn}} = \mathrm{CrossAttn}(Q=\mathbf{x}_{\mathrm{lat}}, K=\mathbf{V}_{\mathrm{lat}}, V=\mathbf{V}_{\mathrm{lat}})$, $\mathbf{h}_{\mathrm{lat}} = \mathbf{h}_{\mathrm{attn}} + \mathrm{FFN}_{\mathrm{lat}}(\mathrm{RMSNorm}(\mathbf{h}_{\mathrm{attn}}))$.
    
    \item \textbf{Restoration:} The refined latent feature is projected back to the original high-dimensional space via an up-projection matrix $\mathbf{W}_{\mathrm{up}} \in \mathbb{R}^{d' \times d}$: $\mathbf{h}_{\mathrm{pvm}} = \mathbf{h}_{\mathrm{lat}} \mathbf{W}_{\mathrm{up}}$.
\end{enumerate}
\paragraph{Gated Fusion with Selective Activation.}
We integrate the visual memory into the main stream via a residual connection controlled by a learnable scalar gate $\lambda$. 
To preserve the integrity of visual representations, we employ a Visual Silencing Mask $\mathcal{M}_{\mathrm{txt}}$, a binary indicator that activates only for text tokens.
The final output $\mathbf{y}$ is computed as:
\begin{equation}
    \mathbf{y} = \mathbf{x} + \mathbf{h}_{\mathrm{ffn}} + \underbrace{(\lambda \cdot \mathbf{h}_{\mathrm{pvm}}) \cdot \mathcal{M}_{\mathrm{txt}}}_{\text{Active Visual Injection}}
\end{equation}
where $\lambda$ is initialized to 0 to preserve pre-trained capabilities. 
Crucially, the fused representation $\mathbf{y}$ maintains the shape of $\mathbf{x}$, allowing seamless integration into the backbone without structural changes.

\begin{figure*}[t]
    \centering
    \includegraphics[width=1\textwidth]{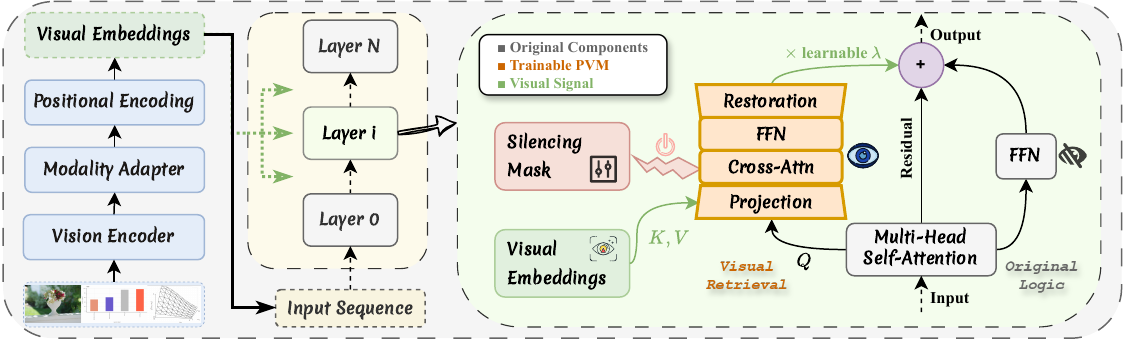} 
    \caption{\textbf{Overview of the Persistent Visual Memory (PVM) framework.} PVM is integrated parallel to the frozen FFN to shield active visual retrieval from sequential dilution during autoregressive reasoning. It treats the hidden state as a query to retrieve specific visual contexts via a parameter-efficient bottleneck adapter (consisting of Projection, Cross-Attn, FFN, and Restoration). The module employs a Silencing Mask to activate selectively during text generation, injecting the retrieved visual signal into the main stream through a learnable gate to enhance visual grounding.}
    \label{fig:PVM}
\end{figure*}

\subsection{Structural Analysis of PVM}
\label{sec:guarantee}

We conclude our method description by formally characterizing how PVM addresses the bottleneck identified in Section~\ref{sec:problem_analysis}. 
Recall that Theorem~\ref{thm:dilution} attributes visual signal dilution to the contamination of the partition function by the growing textual mass $Z_{\mathcal{T}}$. 
PVM implements \textit{independent attention normalization} to structurally mitigate this dependency.

\paragraph{Decoupled Partition Function.}
In the PVM branch, the attention operation is confined to the visual domain. Given an input hidden state $\mathbf{x}$, the probability mass assigned to the $k$-th visual token is:
\begin{equation}
    \label{eq:pvm_normalization}
    \beta_{k}(\mathbf{x}) = \frac{\exp(\mathbf{x} \mathbf{W}_Q (\mathbf{v}_k \mathbf{W}_K)^\top / \sqrt{d'})}{Z_{\mathrm{pvm}}(\mathbf{x})}
\end{equation}
where $Z_{\mathrm{pvm}}(\mathbf{x}) = \sum_{j \in \mathcal{V}} \exp(\mathbf{x} \mathbf{W}_Q (\mathbf{v}_j \mathbf{W}_K)^\top / \sqrt{d'})$. Crucially, the summation in $Z_{\mathrm{pvm}}$ is bounded exclusively by the fixed visual set $\mathcal{V}$.
To mathematically isolate this structural decoupling from the evolving nature of autoregressive queries, we adopt a fixed local query assumption for the hidden state $\mathbf{x}$ (detailed in Appendix~\ref{sec:appendix_approximation}).

\begin{theorem}[\textbf{Local Structural Mitigation of Visual Signal Dilution}]
\label{thm:immunity}
Under the PVM architecture, and conditioning on a fixed local hidden state $\mathbf{x}$, the retrieval representation $\mathbf{h}_{\mathrm{pvm}}$ is decoupled from the textual-history length $t$ in its partition function. Specifically, it satisfies the following local invariance property: $\frac{\partial \|\mathbf{h}_{\mathrm{pvm}}\|}{\partial t} = 0$.
This stands in contrast to the standard backbone, where visual relevance decays asymptotically as $\Omega_{\mathcal{V}}(t) \in \mathcal{O}(t^{-1})$ (Theorem~\ref{thm:dilution}).
\end{theorem}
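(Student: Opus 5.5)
The approach is to write $\mathbf{h}_{\mathrm{pvm}}$ explicitly as a composition of maps and observe that, with $\mathbf{x}$ held fixed, none of its inputs depend on $t$.

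\emph{Step 1: unroll the branch.} By the three-stage computation in Section~\ref{sec:pvm_arch},
\[
\mathbf{h}_{\mathrm{pvm}} = \Phi(\mathbf{x}, \mathbf{V}_{\mathrm{img}}) := \Big(\mathbf{a} + \mathrm{FFN}_{\mathrm{lat}}(\mathrm{RMSNorm}(\mathbf{a}))\Big)\mathbf{W}_{\mathrm{up}},
\]
where
\[
\mathbf{a} = \sum_{k\in\mathcal{V}} \beta_k(\mathbf{x})\, \mathbf{v}_k \mathbf{W}_{\mathrm{down}}^{\mathrm{vis}} \mathbf{W}_V
\]
and $\beta_k$ is given by Eq.~\ref{eq:pvm_normalization}. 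The projection matrices, the latent FFN, the RMSNorm parameters and $\mathbf{W}_{\mathrm{up}}$ are all fixed parameters, so none of them depends on $t$.

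\emph{Step 2: check which arguments could vary with $t$.} There are three candidates.
\begin{enumerate}
\item The key/value set is $\mathcal{V}$. It has cardinality $M$, and $\mathbf{V}_{\mathrm{img}}$ consists of the original visual embeddings, which are fixed precursors of the sequence. It therefore does not change as text tokens are appended.
\item The partition function $Z_{\mathrm{pvm}}(\mathbf{x})$ sums only over $j\in\mathcal{V}$. This is the key contrast with Eq.~\ref{eq:standard_attn}: there is no $Z_{\mathcal{T}}(\mathbf{q}_t,t)$ term, so $t$ does not enter the normalization.
\item The query $\mathbf{x}$ is held fixed by the local query assumption of Appendix~\ref{sec:appendix_approximation}.
\end{enumerate}
Hence $\Phi$ is a function of arguments none of which depends on $t$. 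Treating $t$ as a formal parameter, $\mathbf{h}_{\mathrm{pvm}}$ is constant in $t$, so $\partial \mathbf{h}_{\mathrm{pvm}}/\partial t = 0$. Because the norm is applied to a $t$-independent vector, $\partial\|\mathbf{h}_{\mathrm{pvm}}\|/\partial t = 0$ as well, and no differentiability of the norm at zero is needed. The same argument gives the stronger statement that each weight $\beta_k(\mathbf{x})$ is $t$-invariant and that $\sum_k \beta_k = 1$ for every $t$. In other words, the PVM-branch analogue of $\Omega_{\mathcal{V}}$ equals $1$ identically.

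\emph{Step 3: contrast with Theorem~\ref{thm:dilution}.} I will restate that theorem's bound $\Omega_{\mathcal{V}}(t)\le \beta/(\beta+\mu t)$ and note that its $t$-dependence enters only through $Z_{\mathcal{T}}$. The PVM normalization structurally excludes that term.

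The main obstacle is making precise what "$\partial/\partial t$" means, since $t$ is a discrete index and in reality $\mathbf{x}$ depends on the history through the backbone's self-attention. I would handle this by stating explicitly that the derivative is a formal (or finite-difference) derivative taken at fixed $\mathbf{x}$, so the claim is a \emph{local} invariance of the map $\Phi(\cdot,\mathbf{V}_{\mathrm{img}})$. This is exactly what the fixed local query assumption licenses. I would also remark that any indirect $t$-dependence flows only through $\mathbf{x}$ and not through the partition function, which is the point of the decoupling.
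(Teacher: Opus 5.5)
Your proposal is correct and follows the paper's own argument. With $\mathbf{x}$ held fixed and $\mathcal{V}$ static, $t$ enters neither the scores $\psi_k$ nor the index set of $Z_{\mathrm{pvm}}$, so $\mathbf{h}_{\mathrm{pvm}}$, and hence its norm, is constant in $t$. Your unrolling is slightly more faithful than the paper's displayed formula, which drops the $\mathrm{FFN}_{\mathrm{lat}}(\mathrm{RMSNorm}(\cdot))$ stage, and your reading of $\partial/\partial t$ as a formal derivative at fixed $\mathbf{x}$ matches the caveat the paper places in Appendix~\ref{sec:appendix_approximation}.
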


\begin{proof}
Let $\psi_k = \frac{1}{\sqrt{d'}} \mathbf{x} \mathbf{W}_Q (\mathbf{v}_k \mathbf{W}_K)^\top$ denote the unnormalized attention score. The PVM output is explicitly formulated as:
\begin{equation}
    \mathbf{h}_{\mathrm{pvm}} = \left( \sum_{k \in \mathcal{V}} \frac{\exp(\psi_k)}{Z_{\mathrm{pvm}}(\mathbf{x})} \mathbf{v}_k \mathbf{W}_{V} \right) \mathbf{W}_{\mathrm{up}}
\end{equation}
The partition function $Z_{\mathrm{pvm}}(\mathbf{x}) = \sum_{j \in \mathcal{V}} \exp(\psi_j)$ sums over the fixed visual set $\mathcal{V}$ with constant cardinality $M$. 
 Given the static nature of the visual set $\mathcal{V}$ and conditioning on a fixed local hidden state $\mathbf{x}$, the variable $t$ appears neither in the score computation $\psi_k$ nor in the summation index $\mathcal{V}$, and is therefore algebraically absent from the definition of $\mathbf{h}_{\mathrm{pvm}}$.
Thus, the partial derivative with respect to the sequence length vanishes locally: $\frac{\partial \|\mathbf{h}_{\mathrm{pvm}}\|}{\partial t} = 0$.
\end{proof}

\begin{remark}
Theorem~\ref{thm:immunity} validates the design principles proposed in Section~\ref{subsec:empirical} under the assumption stated above:
\begin{itemize}[leftmargin=*]
    \item \textbf{Realization of Insight 1 (Parallel Memory Pathway).} 
    The algebraic absence of $t$ from the PVM partition function confirms the structural isolation of the visual stream. Unlike the entangled backbone, this parallel design establishes a dedicated channel for visual grounding, thereby supporting stronger perceptual capabilities.
    
    \item \textbf{Realization of Insight 2 (Independent Attention Normalization).} 
    By restricting the partition function to the fixed visual set, the PVM branch avoids direct probability competition from growing text. This structurally mitigates the dilution effect that arises in standard attention.
\end{itemize}
\end{remark}
\section{Experiments}
\label{sec:experiments}

\paragraph{Models and Data.}
We adopt the Qwen3-VL-Instruct~\cite{bai2025qwen3vltechnicalreport} series (4B and 8B variants) as our backbones. To ensure minimal overhead, PVM modules are integrated into selected intermediate Transformer layers aligning with the configuration in DeepStack~\cite{meng2024deepstack}, introducing only 27.92M additional trainable parameters (a negligible $\sim$0.32\% of the 8B model size). 
Our training data comprises two subsets: a supervised fine-tuning set $\mathcal{D}_{\text{sft}}$ of 526k samples from OpenMMReasoner-SFT~\cite{zhang2025openmmreasoner}, and a reinforcement learning set $\mathcal{D}_{\text{rl}}$ of 3.6k complex reasoning queries aggregated from MMK12~\cite{meng2025mm}, ThinkLite-VL-hard~\cite{wang2025sota}, ViRL39K~\cite{wang2025vl}, and We-Math2.0-Pro~\cite{qiao2025we}.

\paragraph{Training Details.}
Our pipeline contains two stages:
\textbf{Stage I: Visual Memory Alignment (SFT).} We freeze the backbone and exclusively optimize the PVM modules and gating scalars to establish the semantic mapping between textual queries and visual keys.
\textbf{Stage II: Policy Refinement (GRPO).} Using Group Relative Policy Optimization~\cite{shao2024deepseekmath}, we unfreeze the LLM backbone and PVM modules (keeping the Vision Encoder frozen) to enforce active visual retrieval in complex reasoning.
Further details on model configuration, data curation, and hyperparameters are provided in Appendix~\ref{app:training_details}.

\paragraph{Baselines.}
We benchmark PVM against three distinct categories of methods: 
(1) \textbf{Internal Baselines:} We compare against the original Qwen3-VL alongside its SFT, LoRA~\cite{hu2022lora}, and RL-enhanced variants trained on our datasets. This rigorously isolates PVM's architectural benefits from gains driven purely by high-quality data or RL alignment.
(2) \textbf{Visual Injection Methods:} We include three representative paradigms: MemVR~\cite{zou2024look} for uncertainty-driven visual retracing, ICoT~\cite{gao2025interleaved} for attention-based token interleaving, and CoMemo~\cite{liu2025comemo} for spatially-aware visual retention. To ensure a rigorous comparison, we re-implemented all these methods on the identical Qwen3-VL-8B-Instruct backbone, thereby isolating algorithmic efficacy from architectural discrepancies.
(3) \textbf{RL Reasoners:} To benchmark our performance in complex reasoning, we compare against leading multimodal reasoning models, specifically Euclid-8B~\cite{lian2025euclid}, PEARL-8B~\cite{zhang2025perceptual}, and OneThinker-8B~\cite{feng2025onethinker}.

\paragraph{Evaluation Benchmarks.}
We conduct a comprehensive evaluation on eight diverse multimodal benchmarks, ranging from general multimodal understanding to domain-specific reasoning in mathematics and science, including MMMU~\cite{yue2024mmmu}, MMBench-CN, MMBench-EN~\cite{liu2024mmbench}, MMStar~\cite{chen2024we}, MMT~\cite{ying2024mmt}, MathVerse~\cite{zhang2024mathverse}, MathVision~\cite{wang2024measuring}, and AI2D~\cite{kembhavi2016diagram}. We report 4-run average accuracy at an inference temperature of 0.7, using lmms-eval~\cite{zhang2025lmms} for all evaluations to ensure fair comparison.
\section{Results}
\label{sec:results}

\newcolumntype{Y}{>{\centering\arraybackslash}X}
\definecolor{LightCyan}{rgb}{0.88,1,1}
\definecolor{lightyellow}{RGB}{254,249,229}
\definecolor{ouryellow}{RGB}{252,236,167}
\definecolor{lightgreen}{RGB}{235,241,230}
\definecolor{lightpink}{RGB}{244,234,232}
\colorlet{upgreen}{green!75!black}
\colorlet{downred}{red!75!black}

\newcommand{\basep}[1]{\textsubscript{\scalebox{0.8}{\textcolor{upgreen}{\tiny$\uparrow$#1}}}}
\newcommand{\basem}[1]{\textsubscript{\scalebox{0.8}{\textcolor{downred}{\tiny$\downarrow$#1}}}}

\begin{table}[!t]
\caption{\textbf{Main Results on General and Reasoning Benchmarks.} We report 4-run average accuracy (\%) across 8 benchmarks, with the best and second-best performances highlighted in \textbf{bold} and \underline{underlined}. The numbers in subscripts denote the absolute improvement (\textcolor{upgreen}{$\uparrow$}) or decline (\textcolor{downred}{$\downarrow$}) relative to\colorbox{gray!10}{the original model baseline}. \textbf{PVM (SFT + GRPO)} achieves the best overall performance, demonstrating consistent gains in both comprehensive understanding and specialized reasoning.}
\label{tab:main_results_image}

\centering
\setlength{\tabcolsep}{4pt}
\renewcommand{\arraystretch}{1.00}
\scriptsize

\resizebox{\textwidth}{!}{%
\begin{tabular}{r | c c c c c c | c c c c | c}
\toprule
\multicolumn{1}{c|}{\multirow{2}{*}[-2mm]{\textbf{Model}}} &
\multicolumn{6}{c|}{\textbf{General \& Comprehensive}} &
\multicolumn{4}{c|}{\textbf{Math \& Science}} &
\multicolumn{1}{c}{\multirow{2}{*}[-2mm]{\textbf{Avg.}}} \\
\cmidrule{2-7} \cmidrule{8-11}
& \textbf{MMMU$^\text{dev}$}
& \makecell{\textbf{MMBench}\\\textbf{-CN$^\text{lite}$}}
& \makecell{\textbf{MMBench}\\\textbf{-EN$^\text{lite}$}}
& \textbf{MMStar}
& \textbf{MMT$^\text{emo}$}
& \textbf{Avg.}
& \makecell{\textbf{Math}\\\textbf{Verse$^\text{V}$}}
& \makecell{\textbf{Math}\\\textbf{Vision$^\text{mini}$}}
& \textbf{AI2D$^\text{lite}$}
& \textbf{Avg.}
& \\
\midrule

\rowcolor{lightpink!100} \multicolumn{12}{c}{\textit{Visual Injection Methods}} \\
\midrule
MemVR~\cite{zou2024look} & 59.3\basep{2.0} & 86.4\basep{0.0} & 86.4\basep{0.0} & 65.4\basem{3.3} & 54.2\basem{2.5} & 70.3\basem{0.8} & 52.9\basep{0.0} & 48.4\basep{3.0} & 78.8\basem{1.0} & 60.0\basep{0.6} & 66.5\basem{0.2} \\
ICoT~\cite{gao2025interleaved} & 63.3\basep{6.0} & 88.2\basep{1.8} & \underline{88.6}\basep{2.2} & 69.3\basep{0.6} & 54.2\basem{2.5} & 72.7\basep{1.6} & 56.1\basep{3.2} & 48.1\basep{2.7} & 78.6\basem{1.2} & 60.9\basep{1.5} & 68.3\basep{1.6} \\
CoMemo~\cite{liu2025comemo} & 62.0\basep{4.7} & 88.0\basep{1.6} & \underline{88.6}\basep{2.2} & 69.9\basep{1.2} & \textbf{60.8}\basep{4.1} & 73.9\basep{2.8} & 55.0\basep{2.1} & 43.4\basem{2.0} & 79.6\basem{0.2} & 59.3\basem{0.1} & 68.4\basep{1.7} \\
\midrule

\rowcolor{lightgreen!100} \multicolumn{12}{c}{\textit{Recent RL-tuned models}} \\
\midrule
Euclid-8B~\cite{lian2025euclid} & 62.0\basep{4.7} & \underline{90.4}\basep{4.0} & \underline{88.6}\basep{2.2} & 71.3\basep{2.6} & 54.2\basem{2.5} & 73.3\basep{2.2} & 57.9\basep{5.0} & 49.0\basep{3.6} & \underline{82.6}\basep{2.8} & \underline{63.2}\basep{3.8} & 69.5\basep{2.8} \\
PEARL-8B~\cite{zhang2025perceptual} & 65.3\basep{8.0} & 87.2\basep{0.8} & 87.1\basep{0.7} & \underline{71.5}\basep{2.8} & 55.0\basem{1.7} & 73.2\basep{2.1} & \underline{58.9}\basep{6.0} & 47.7\basep{2.3} & 81.8\basep{2.0} & 62.8\basep{3.4} & 69.3\basep{2.6} \\
OneThinker-8B~\cite{feng2025onethinker} & 62.0\basep{4.7} & 87.2\basep{0.8} & 87.9\basep{1.5} & 71.2\basep{2.5} & 51.7\basem{5.0} & 72.0\basep{0.9} & 57.4\basep{4.5} & 45.4\basep{0.0} & 81.4\basep{1.6} & 61.4\basep{2.0} & 68.0\basep{1.3} \\
\midrule

\rowcolor{ouryellow!60} \multicolumn{12}{c}{\textit{Our comprehensive comparison}} \\
\midrule
\rowcolor{gray!10}
Qwen3-VL-8B-Instruct & 57.3\basep{0.0} & 86.4\basep{0.0} & 86.4\basep{0.0} & 68.7\basep{0.0} & 56.7\basep{0.0} & 71.1\basep{0.0} & 52.9\basep{0.0} & 45.4\basep{0.0} & 79.8\basep{0.0} & 59.4\basep{0.0} & 66.7\basep{0.0} \\
SFT & 60.7\basep{3.4} & 88.0\basep{1.6} & 87.9\basep{1.5} & 67.7\basem{1.0} & 50.8\basem{5.9} & 71.0\basem{0.1} & 56.9\basep{4.0} & 48.0\basep{2.6} & 79.0\basem{0.8} & 61.3\basep{1.9} & 67.4\basep{0.7} \\
LoRA-SFT & 63.3\basep{6.0} & 88.8\basep{2.4} & \underline{88.6}\basep{2.2} & 70.2\basep{1.5} & 51.7\basem{5.0} & 72.5\basep{1.4} & 55.0\basep{2.1} & 42.8\basem{2.6} & 79.8\basep{0.0} & 59.2\basem{0.2} & 67.5\basep{0.8} \\
\rowcolor{LightCyan}
PVM-8B (SFT) & \underline{66.7}\basep{9.4} & \underline{90.4}\basep{4.0} & \textbf{89.4}\basep{3.0} & 71.2\basep{2.5} & \underline{58.3}\basep{1.6} & \underline{75.2}\basep{4.1} & 57.5\basep{4.6} & \underline{50.7}\basep{5.3} & 80.8\basep{1.0} & 63.0\basep{3.6} & \underline{70.6}\basep{3.9} \\
\hline

SFT + GRPO & 60.7\basep{3.4} & 88.8\basep{2.4} & 87.9\basep{1.5} & 68.6\basem{0.1} & 54.2\basem{2.5} & 72.0\basep{0.9} & 58.5\basep{5.6} & 48.0\basep{2.6} & 79.6\basem{0.2} & 62.0\basep{2.6} & 68.3\basep{1.6} \\
LoRA-SFT + GRPO & 64.7\basep{7.4} & 86.4\basep{0.0} & 87.1\basep{0.7} & 71.0\basep{2.3} & 52.5\basem{4.2} & 72.3\basep{1.2} & 57.6\basep{4.7} & 46.7\basep{1.3} & 81.0\basep{1.2} & 61.8\basep{2.4} & 68.4\basep{1.7} \\
\rowcolor{LightCyan}
\textbf{PVM-8B (SFT + GRPO)} & \textbf{67.3}\basep{10.0} & \textbf{91.2}\basep{4.8} & \textbf{89.4}\basep{3.0} & \textbf{71.6}\basep{2.9} & \underline{58.3}\basep{1.6} & \textbf{75.6}\basep{4.5} & \textbf{59.8}\basep{6.9} & \textbf{51.3}\basep{5.9} & \textbf{82.8}\basep{3.0} & \textbf{64.6}\basep{5.2} & \textbf{71.5}\basep{4.8} \\
\midrule
\midrule

\rowcolor{lightyellow!100} \multicolumn{12}{c}{\textit{Results of 4B models}} \\
\midrule
\rowcolor{gray!10}
Qwen3-VL-4B-Instruct & 57.3\basep{0.0} & 86.0\basep{0.0} & 78.8\basep{0.0} & 66.7\basep{0.0} & \underline{56.7}\basep{0.0} & 69.1\basep{0.0} & 52.4\basep{0.0} & 35.9\basep{0.0} & 78.4\basep{0.0} & 55.6\basep{0.0} & 64.0\basep{0.0} \\
SFT & 58.0\basep{0.7} & 87.2\basep{1.2} & 85.6\basep{6.8} & 67.7\basep{1.0} & 50.8\basem{5.9} & 69.9\basep{0.8} & 52.7\basep{0.3} & 37.5\basep{1.6} & 77.6\basem{0.8} & 55.9\basep{0.3} & 64.6\basep{0.6} \\
LoRA-SFT & 56.0\basem{1.3} & 87.2\basep{1.2} & \underline{87.1}\basep{8.3} & 66.7\basep{0.0} & 55.0\basem{1.7} & 70.4\basep{1.3} & 52.4\basep{0.0} & 37.2\basep{1.3} & 79.2\basep{0.8} & 56.3\basep{0.7} & 65.1\basep{1.1} \\
\rowcolor{LightCyan}
PVM-4B (SFT) & \underline{60.7}\basep{3.4} & 88.0\basep{2.0} & \textbf{87.9}\basep{9.1} & 67.9\basep{1.2} & \textbf{57.5}\basep{0.8} & \underline{72.4}\basep{3.3} & 54.4\basep{2.0} & 41.5\basep{5.6} & \underline{80.0}\basep{1.6} & \underline{58.6}\basep{3.0} & \underline{67.2}\basep{3.2} \\
\hline

SFT + GRPO & 58.0\basep{0.7} & 88.0\basep{2.0} & 85.6\basep{6.8} & 65.1\basem{1.6} & 53.3\basem{3.4} & 70.0\basep{0.9} & \underline{54.6}\basep{2.2} & \underline{42.8}\basep{6.9} & 78.6\basep{0.2} & \underline{58.6}\basep{3.0} & 65.7\basep{1.7} \\
LoRA-SFT + GRPO & 56.0\basem{1.3} & \underline{88.8}\basep{2.8} & 84.9\basep{6.1} & \underline{69.0}\basep{2.3} & 55.8\basem{0.9} & 70.9\basep{1.8} & 54.2\basep{1.8} & 42.4\basep{6.5} & 76.8\basem{1.6} & 57.8\basep{2.2} & 66.0\basep{2.0} \\
\rowcolor{LightCyan}
\textbf{PVM-4B (SFT + GRPO)} & \textbf{62.7}\basep{5.4} & \textbf{90.4}\basep{4.4} & \textbf{87.9}\basep{9.1} & \textbf{69.2}\basep{2.5} & 55.8\basem{0.9} & \textbf{73.2}\basep{4.1} & \textbf{55.0}\basep{2.6} & \textbf{45.4}\basep{9.5} & \textbf{81.0}\basep{2.6} & \textbf{60.4}\basep{4.8} & \textbf{68.4}\basep{4.4} \\
\bottomrule
\end{tabular}%
}
\end{table}

\subsection{Main Results}
\label{subsec:main_results}

Table~\ref{tab:main_results_image} benchmarks our approach against internal baselines, specialized visual injection methods, and recent RL-tuned models. 
On the 8B backbone, PVM-SFT demonstrates strong performance with an overall score of 70.6\%, outperforming vanilla SFT, LoRA-SFT, and existing methods like CoMemo~\cite{liu2025comemo} and ICoT~\cite{gao2025interleaved}. Notably, it effectively mitigates the degradation observed in perception tasks (e.g., MMT) common to standard fine-tuning.
When enhanced with GRPO, our method brings notable improvement with an overall score of 71.5\%, exceeding robust RL-tuned competitors like Euclid-8B~\cite{lian2025euclid} and PEARL-8B~\cite{zhang2025perceptual}.
Crucially, this efficacy proves highly consistent across model scales: on 4B model size, our approach also delivers a consistent +4.4\% overall improvement.

\begin{figure}[!bp]
    \centering
    \begin{minipage}[t]{0.48\textwidth}
        \centering
        \vspace{0pt} 
        \includegraphics[height=3.3cm]{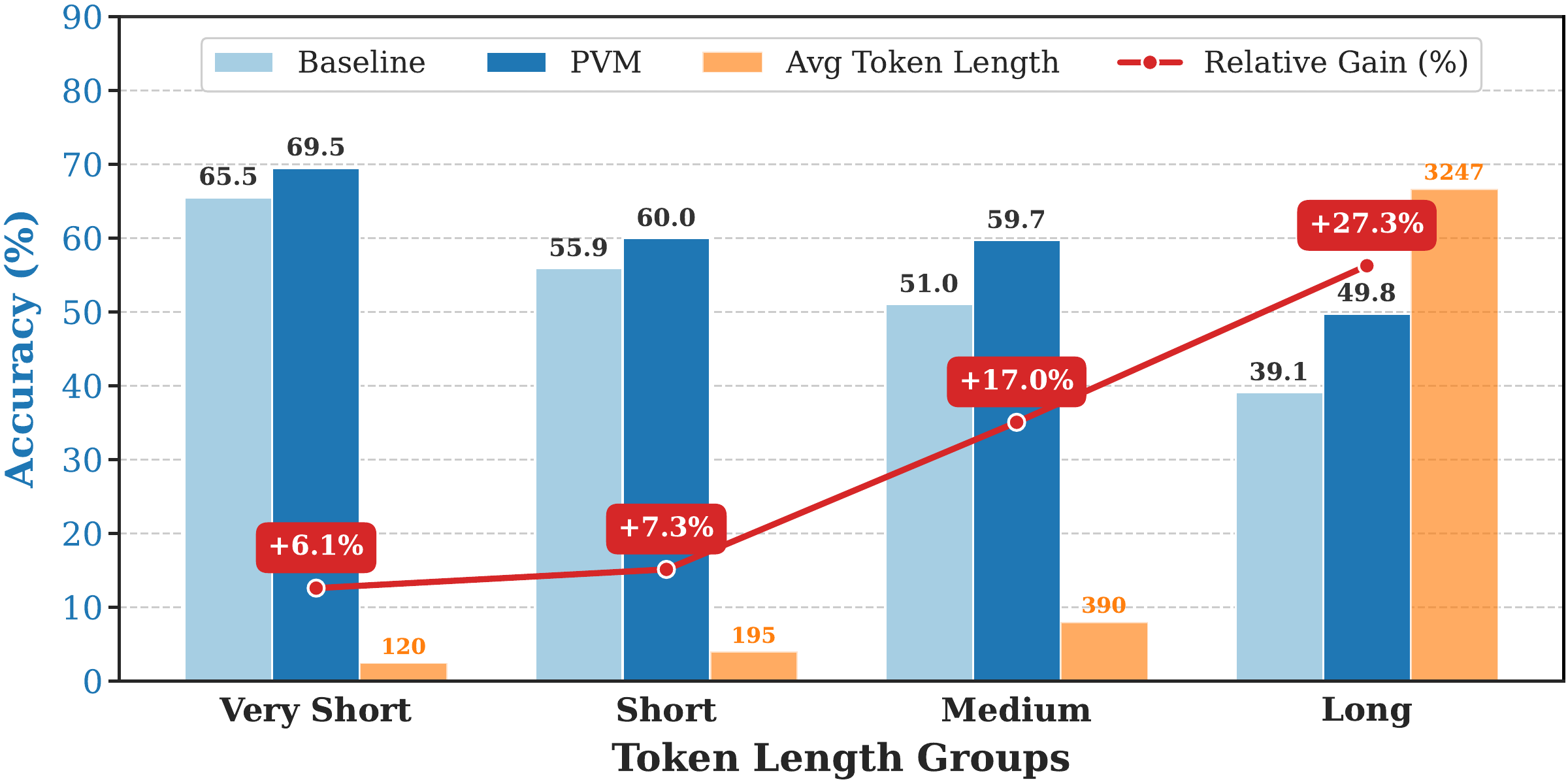} 
        \caption{\textbf{Performance Gain vs. Token Length.} The relative improvement scales with sequence length, surging to +27.3\% in the ``Long'' group. This suggests PVM helps structurally mitigate visual signal dilution in deep generation.}
        \label{fig:length_analysis}
    \end{minipage}
    \hfill
    \begin{minipage}[t]{0.48\textwidth}
        \centering
        \vspace{0pt} 
        \includegraphics[height=3.3cm]{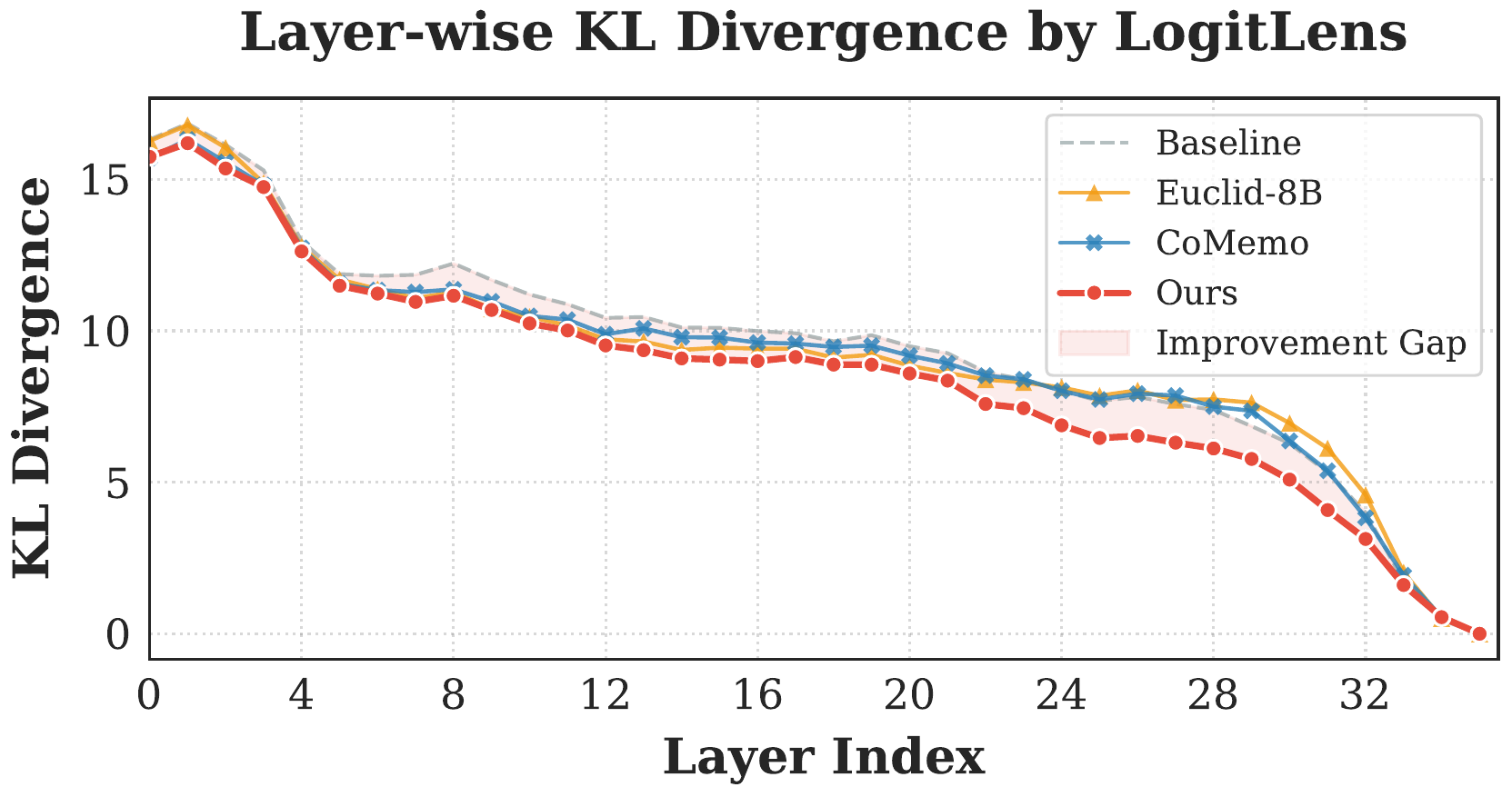} 
        \caption{\textbf{Layer-wise Prediction Convergence.} The steeper descent and distinct gap (shaded) suggest that PVM accelerates prediction convergence compared to strong baselines.}
        \label{fig:convergence}
    \end{minipage}
\end{figure}

\subsection{Robustness to Extended Generation}
\label{subsec:length_analysis}

To assess robustness in extended generation, we compare PVM-8B (SFT + GRPO) against the Qwen3-VL-8B-Instruct baseline on MathVerse$^\text{V}$, stratifying the samples into four equal-sized groups based on output length.
Figure~\ref{fig:length_analysis} shows a distinct positive correlation emerges between sequence length and relative gain.
While PVM yields a moderate +6.1\% improvement on ``Very Short'' samples, it unlocks a dramatic +27.3\% boost on ``Long'' ones, where the baseline suffers severe visual dilution.
To decouple this effect from problem difficulty, we provide further validation in Appendix~\ref{app:difficulty_control}.
This suggests PVM serves as a useful stabilizer: the deeper the reasoning chain, the more indispensable sustained visual retrieval becomes to prevent detachment from visual evidence.

\subsection{Mechanistic Analysis of PVM}
\label{subsec:convergence}

Following~\citet{cheng2026conditional}, to probe the internal mechanism of how PVM influences the model's predictive dynamics, we employ the \textbf{LogitLens} technique~\cite{nostalgebraist2020logitlens}. 
We quantify \textit{prediction readiness}~\cite{belrose2023eliciting, csordas2025language} by measuring the Kullback-Leibler (KL) divergence between intermediate layer representations and the final output distribution (see Appendix~\ref{sec:app_logitlens} for details).
Consistent with Section~\ref{subsec:empirical}, we conduct this analysis on the ``Blind Painter'' test to isolate the model's behavior under high visual dependency.

As shown in Figure~\ref{fig:convergence}, while baselines exhibit a gradual KL decline, PVM establishes a distinctly lower trajectory. 
Crucially, a significant ``Improvement Gap'' emerges after the initial injection (Layer 8) and widens across deeper layers. 
This suggests that PVM may help \textit{short-circuit} the information gathering process: by actively offloading visual retrieval to the parallel branch, the backbone accelerates its transition from perception to reasoning, thus speeding up convergence.

\subsection{Ablation Studies}
\label{subsec:ablation}

\begin{wraptable}{R}{0.5\textwidth} 
\vspace{-4.5mm}
\centering
\caption{\textbf{Ablation on Retrieval Source.} Replacing raw visual embeddings with processed hidden states causes severe performance degradation.}
\label{tab:ablation_source}
\scriptsize
\setlength\tabcolsep{2pt} 
\begin{tabular}{l|ccc|c} 
\toprule
\textbf{Retrieval Source ($K, V$)} & \textbf{MathVerse} & \textbf{MathVision} & \textbf{AI2D} & \textbf{Avg.}\\
\midrule
Baseline & 52.9 & 45.4 & 79.8 & 59.4 \\
Processed Hidden States & 27.9 & 14.1 & 58.2 & 33.4 \\
\rowcolor{LightCyan} \textbf{Visual Embeddings (Ours)} & \textbf{57.5} & \textbf{50.7} & \textbf{80.8} & \textbf{63.0} \\
\bottomrule
\end{tabular}
\end{wraptable}
\paragraph{Necessity of Raw Visual Retrieval.} To isolate the source of improvement, we replaced the raw visual embeddings with current processed hidden states ($Q, K, V \leftarrow \mathbf{x}$) and re-trained this variant under the identical two-stage pipeline. 
As shown in Table~\ref{tab:ablation_source},  this triggers a catastrophic collapse across reasoning benchmarks, indicating that re-integrating text-dominated hidden states creates a destructive self-reflexive loop that disrupts logical coherence. 
This validates that the true gains stem from PVM's retrieval design.

\paragraph{Injection Layer Selection.}
Based on the visual attention dynamics analyzed in Section~\ref{subsec:empirical}, we investigate the optimal placement for PVM modules. 
We compare our default \textit{Strided Strategy} (Layers 8, 16, 24) against two data-driven alternatives (see Appendix~\ref{sec:appendix_layer_selection} for detailed calculations): 
(1) \textit{Peak Attention}: Injecting into layers with the highest intrinsic visual attention mass (13, 17, 18) to reinforce existing signals; 
and (2) \textit{Max Decay}: Targeting layers that exhibit the sharpest drop in visual attention mass (14, 19, 22) to actively compensate for signal loss.

\begin{wraptable}{R}{0.41\textwidth} 
\centering
\vspace{-4.5mm} 
\caption{\textbf{Ablation on Injection Strategy.} Comparison of layer selection strategies for PVM modules on 8B model.}
\label{tab:ablation_layers}
\scriptsize
\setlength\tabcolsep{3pt} 
\begin{tabular}{l|c|cc|c} 
\toprule
\textbf{Selection Strategy} & \textbf{Layers} & \textbf{Gen.} & \textbf{Reas.} & \textbf{Avg.} \\
\midrule
Peak Attention & 13, 17, 18 & 72.9 & 60.9 & 68.4 \\
Max Decay & 14, 19, 22 & 74.2 & 61.2 & 69.3 \\
\rowcolor{LightCyan} \textbf{Strided (Ours)} & \textbf{8, 16, 24} & \textbf{75.2} & \textbf{63.0} & \textbf{70.6} \\
\bottomrule
\end{tabular}
\vspace{-4.5mm}
\end{wraptable}
As shown in Table~\ref{tab:ablation_layers}, \textit{Peak Attention} yields the lowest scores (68.4\%), indicating diminishing returns. While \textit{Max Decay} improves performance to 69.3\%, our \textit{Strided Strategy} proves superior (70.6\%). Unlike the clustered decay-based layers, our configuration spans the network's full depth. This global coverage ensures consistent visual grounding across diverse processing stages, yielding a +1.8\% reasoning gain over the decay-focused approach.

\paragraph{Extended Analyses.} The Appendix details latent dimension sensitivity (Appendix~\ref{app:ablation_dim}), a rigorous iso-parameter MLP control (Appendix~\ref{app:iso_parameter}), and inference overhead benchmarking (Appendix~\ref{app:overhead}).
\section{Conclusion}

In this work, we address {Visual Signal Dilution} through {Persistent Visual Memory (PVM)}. 
By establishing a dedicated parallel pathway for active retrieval, PVM helps decouple visual memory retention from the growing autoregressive context, improving access to visual evidence during extended generation. 
Empirically, our approach brings notable improvement across diverse benchmarks with negligible parameter overhead. 
Our findings underscore that shifting from passive retention to \textit{sustained, on-demand perception} is essential for robust, extended-horizon multimodal intelligence.

\bibliographystyle{abbrvnat}
\bibliography{main}







\newpage

\appendix

\section{Visual Attention Mass Heatmap Analysis}
\label{app:visual_attention}

\begin{figure}[!htbp]
    \centering
    \includegraphics[width=0.6\linewidth]{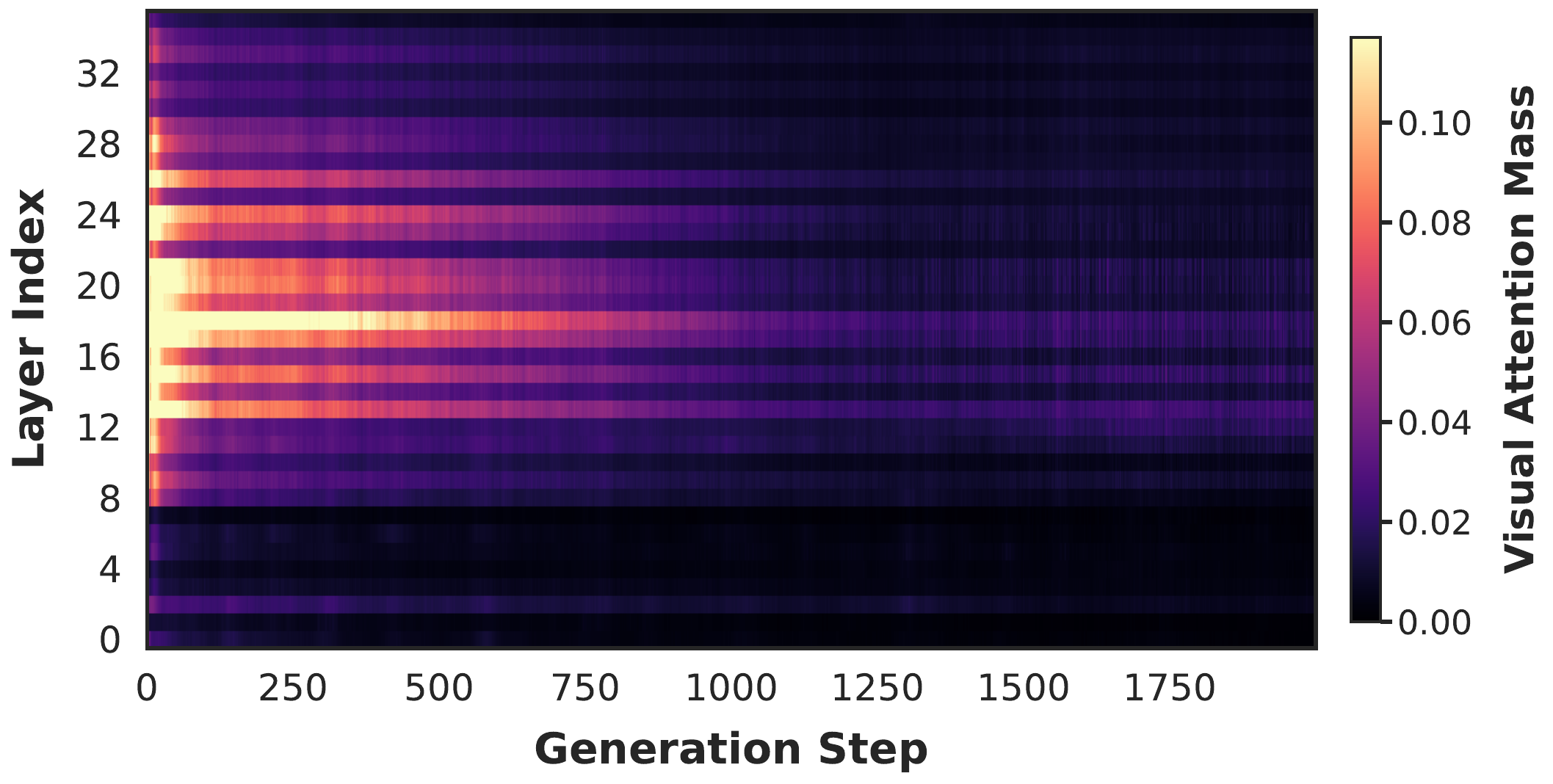}
    \caption{\textbf{Detailed Spatiotemporal Decay of Visual Attention.} 
    The heatmap illustrates the evolution of visual attention mass $\Omega_{\mathcal{V}}$ across all 36 layers of Qwen3-VL-8B-Instruct. The x-axis represents the number of generated text tokens, and the y-axis represents the layer index. 
    Darker regions indicate lower visual attention. 
    A distinct decay forms in the intermediate layers as the sequence grows, highlighting the structural necessity for the Persistent Visual Memory (PVM) module.}
    \label{fig:app_visual_heatmap}
\end{figure}

In Section~\ref{subsec:empirical}, we identified the phenomenon of \textit{Visual Signal Dilution}. To provide a granular understanding of this mechanism, we present the detailed spatiotemporal visualization of visual attention in Figure~\ref{fig:app_visual_heatmap}.

\paragraph{Visualization Methodology.}
The heatmap quantifies the \textbf{Visual Attention Mass} $\Omega_{\mathcal{V}}(l, t)$ for each Transformer layer $l$ at generation step $t$. 
$\Omega_{\mathcal{V}}$ is defined as the sum of probability mass allocated to the visual tokens $\mathcal{V}$ divided by the total probability mass (visual + textual).
The data is collected using the \textbf{``Blind Painter''} stress test (see Appendix~\ref{sec:appendix_prompt}) on \textbf{Qwen3-VL-8B-Instruct}, where the model is prompted to generate long-form descriptions, forcing a continuous demand for visual information.

\paragraph{Layer-wise Dynamics.}
The heatmap reveals that the attention decay is not uniform across the depth of the network. Instead, we observe three distinct architectural zones:

\begin{itemize}
    \item \textbf{Shallow Layers (0--7):} These layers exhibit consistently low visual attention ($\Omega_{\mathcal{V}} < 0.05$) throughout the generation. This aligns with findings in interpretability literature suggesting that early layers in LLMs primarily focus on local syntax and shallow textual features, requiring minimal multimodal integration.
    
    \item \textbf{Intermediate Layers (8--27):} This is the \textbf{critical reasoning zone} where the decay is most pronounced. Initially, these layers show high visual activation ($\Omega_{\mathcal{V}} > 0.10$), indicating their role in semantic grounding. However, as the sequence length $t$ increases, the visual mass in this region suffers a catastrophic collapse, dropping to negligible levels. This observation empirically justifies our decision to inject PVM modules specifically at layers 8, 16, and 24 to reinforce this specific region.
    
    \item \textbf{Deep Layers (28--35):} The final layers revert to a text-dominated state, likely focusing on output formatting and next-token prediction distributions.
\end{itemize}

\section{Discussion on the Fixed Local Query Assumption}
\label{sec:appendix_approximation}

In Section~\ref{sec:guarantee}, Theorem~\ref{thm:immunity} relies on the \textit{fixed local query assumption} to formally characterize the structural mitigation of visual dilution. Here, we provide a detailed discussion on the necessity and implications of this theoretical boundary.

In a rigorous autoregressive generation setting, the input hidden state to the PVM module at step $t$, denoted as $\mathbf{x}_t$, is implicitly a function of the entire preceding context (which includes both visual and growing textual tokens). Consequently, as the textual history expands, the query state $\mathbf{x}_t$ inevitably evolves over time. Due to this dynamic query drift, the final PVM output $\mathbf{h}_{\mathrm{pvm}}$ cannot be absolutely invariant to $t$ across the full global decoding trajectory.

However, the core phenomenon of visual dilution (as identified in Theorem~\ref{thm:dilution}) is primarily driven by the explicit, unconstrained expansion of the textual partition term $Z_{\mathcal{T}}$ within the Softmax denominator. To mathematically isolate this structural bottleneck from the natural semantic evolution of the queries, we introduce the condition of a fixed local hidden state $\mathbf{x}$.

By conditioning on a fixed local query $\mathbf{x}$ and treating the visual set $\mathcal{V}$ as constant keys, we can pinpoint the specific impact of the sequence length $t$ on the attention normalization. Under this approximation, the partial derivative vanishes ($\frac{\partial \|\mathbf{h}_{\mathrm{pvm}}\|}{\partial t} = 0$). While this result does not claim exact global invariance of the visual representations during real-world decoding, it rigorously proves that PVM's partition function is structurally isolated from the expanding textual mass. This decoupling mechanism algebraically shields the visual branch from the probability competition induced by long texts, thereby establishing a persistent visual memory.

\section{Implementation Details}
\label{app:training_details}

\begin{table}[!htbp]
\centering
\caption{\textbf{Detailed Hyperparameter Settings.} We report the specific configurations for both the SFT alignment phase and the GRPO reinforcement learning phase.}
\label{tab:hyperparameters}
\scriptsize
\renewcommand{\arraystretch}{1.2}
\setlength\tabcolsep{6pt}
\begin{tabular}{l|c|c}
\toprule
\textbf{Hyperparameter} & \textbf{Stage I: Alignment (SFT)} & \textbf{Stage II: Refinement (GRPO)} \\
\midrule
\multicolumn{3}{l}{\textit{Optimization}} \\
\midrule
Optimizer & AdamW & AdamW \\
Learning Rate & 1e-4 & 1e-6 \\
LR Scheduler & Cosine & Constant \\
Warmup Ratio & 0.1 & 0.0 \\
Weight Decay & 0.0 & 0.0 \\
Gradient Accumulation & 8 & 8 \\
Max Gradient Norm & 1.0 & 1.0 \\
\midrule
\multicolumn{3}{l}{\textit{Batch \& Sequence}} \\
\midrule
Global Batch Size & 64 & 64 \\
Per-Device Batch Size & 1 & 1 \\
Max Completion Length & None & 16384 \\
Group Size ($G$) for GRPO & N/A & 8 \\
KL Coefficient & N/A & 0.0 \\
\midrule
\multicolumn{3}{l}{\textit{Module Status (Freeze/Train)}} \\
\midrule
Vision Encoder & Frozen & Frozen \\
LLM Backbone & Frozen & Trainable \\
PVM Modules & Trainable & Trainable \\
Projector & Frozen & Frozen \\
\bottomrule
\end{tabular}
\end{table}

In this section, we provide the comprehensive hyperparameter settings and system configurations used to train the PVM-enhanced Qwen3-VL models.

\paragraph{System Infrastructure.}
All experiments were conducted on a high-performance computing cluster equipped with \textbf{8 $\times$ NVIDIA H200 GPUs} (141GB VRAM per GPU). 
Our implementation builds upon the PyTorch framework, utilizing Hugging Face's \texttt{transformers}~\cite{wolf-etal-2020-transformers} and \texttt{trl}~\cite{vonwerra2020trl} libraries. 
To maximize training efficiency and support long-context processing, we employ \textbf{DeepSpeed} optimization strategies tailored to each stage: \textbf{ZeRO-2} is used for the SFT alignment phase, while \textbf{ZeRO-3} is adopted for the GRPO refinement phase to manage the increased memory overhead of group sampling. Additionally, we enable \textbf{FlashAttention-2} for accelerated attention computation, and gradient checkpointing is applied to the language backbone to further reduce the memory footprint.

\paragraph{Model Configuration.}
To target the critical reasoning depths identified in our analysis, we inject PVM modules into the intermediate Transformer layers. Specifically, we select layer indices $\{8, 16, 24\}$ for the \textbf{8B model} and $\{5, 11, 17\}$ for the \textbf{4B model}. 
The bottleneck latent dimension $d'$ is set to 512 by default (as verified in Section~\ref{subsec:ablation}). The gating scalar $\alpha$ is initialized to 0 to ensure a stable warm-up, allowing the model to gradually incorporate the visual memory branch without disrupting the pre-trained autoregressive priors.

\paragraph{Data Curation.}
For the SFT phase, the 526k samples in $\mathcal{D}_{\text{sft}}$ are specifically filtered based on visual centricity and answer clarity from the broader OpenMMReasoner-SFT-874K dataset~\cite{zhang2025openmmreasoner}. For the GRPO refinement phase, the 3.6k queries in $\mathcal{D}_{\text{rl}}$ are curated by generating 8 reasoning rollouts per query. We retain only the samples that exhibited the strongest learning signals to ensure robust policy optimization.

\paragraph{Hyperparameter Settings.}
We adopt a two-stage training strategy: \textbf{Stage I (Visual Memory Alignment)} focuses on initializing the PVM parameters, while \textbf{Stage II (Policy Refinement)} employs GRPO to optimize the model for complex reasoning. The detailed hyperparameters for both stages are listed in Table~\ref{tab:hyperparameters}.

\section{Decoupling Sequence Length from Problem Difficulty}
\label{app:difficulty_control}

In Section~\ref{subsec:length_analysis}, we demonstrated a distinct positive correlation between the sequence length of the generation and the relative performance gain achieved by PVM. However, in autoregressive generation, a natural confounding factor arises: queries that require longer reasoning outputs often correspond to intrinsically harder tasks. To rigorously isolate the effect of sequence length from problem difficulty, we conduct an additional controlled analysis on the MathVerse$^\text{V}$ benchmark.

\textbf{Methodology.} We employed Gemini-3-Flash~\cite{comanici2025gemini} to automatically evaluate and stratify the test samples into discrete difficulty tiers (``Easy'', ``Medium'', and ``Hard''). We excluded the ``Easy'' subset from this analysis due to its statistically insignificant sample size ($n < 10$). Within each of the remaining ``Medium'' and ``Hard'' tiers, we further subdivided the samples into ``Short'', ``Medium'', and ``Long'' equal-sized groups based on their generated token lengths to observe the length-induced dynamics under a fixed difficulty condition.

\begin{table}[h]
\centering
\caption{\textbf{Performance Gain Controlling for Problem Difficulty.} We report the absolute accuracy gain of PVM over the baseline. The numbers in parentheses indicate the average output token length of the samples within that specific group.}
\scriptsize
\label{tab:difficulty_control}
\renewcommand{\arraystretch}{1.2}
\begin{tabular}{lccc}
\toprule
\textbf{Difficulty Tier} & \textbf{Gain (Short)} & \textbf{Gain (Medium)} & \textbf{Gain (Long)} \\
\midrule
\textbf{Medium} & +4.00\% \textit{(120)} & +11.71\% \textit{(192)} & +15.71\% \textit{(1435)} \\
\textbf{Hard}   & +4.27\% \textit{(198)} & +5.98\% \textit{(480)}  & +9.40\% \textit{(3971)} \\
\bottomrule
\end{tabular}
\end{table}

\textbf{Results and Analysis.} The results of this stratified analysis are presented in Table~\ref{tab:difficulty_control}. Crucially, even when strictly controlling for task difficulty, PVM's absolute performance gain consistently scales with sequence length across both the Medium and Hard tiers. For instance, within the ``Medium'' difficulty tier, the improvement surges from +4.00\% on short sequences to +15.71\% on long sequences. 

This consistent upward trend helps decouple task difficulty from the observed gains and provides additional support for our hypothesis that PVM is more beneficial in longer generations, likely due to its structural ability to mitigate length-associated visual dilution.

\section{LogitLens Analysis Formulation}
\label{sec:app_logitlens}

In Section~\ref{subsec:convergence}, we utilize the LogitLens technique to visualize the layer-wise convergence of the model's predictions. This section details the mathematical formulation used for this analysis.

\paragraph{Projection to Vocabulary Space.}
Let $\mathcal{M}$ be an $L$-layer Transformer model with a vocabulary size $V$. Let $\mathbf{h}_{\ell} \in \mathbb{R}^{d}$ denote the hidden state output by the $\ell$-th layer. 
Standard LogitLens projects this intermediate state directly into the vocabulary space using the pre-trained, frozen language modeling head (unembedding matrix) $\mathbf{E} \in \mathbb{R}^{V \times d}$:
\begin{equation}
    P_{\ell} = \text{softmax}(\mathbf{E} \mathbf{h}_{\ell})
\end{equation}
where $P_{\ell} \in \mathbb{R}^{V}$ represents the probability distribution over the vocabulary as predicted by the $\ell$-th layer.

\paragraph{Quantifying Convergence.}
To measure how close an intermediate representation is to the model's final decision, we compute the Kullback-Leibler (KL) divergence between the intermediate distribution $P_{\ell}$ and the final output distribution $P_{\text{final}} = P_{L}$:
\begin{equation}
    D_{\text{KL}}(P_{\text{final}} \parallel P_{\ell}) = \sum_{v=1}^{V} P_{\text{final}}(v) \log \left( \frac{P_{\text{final}}(v)}{P_{\ell}(v)} \right)
\end{equation}
A lower KL divergence value indicates that the features at layer $\ell$ have already encoded sufficient semantic information to approximate the final output. 
By tracking $D_{\text{KL}}$ across $\ell \in \{1, \dots, L\}$, we construct the convergence trajectory visualized in Figure~\ref{fig:convergence}.

\section{Detailed Analysis of Injection Layer Selection}
\label{sec:appendix_layer_selection}

To determine the optimal insertion points for the Persistent Visual Memory (PVM), we conducted a quantitative profiling of the visual attention distribution across all transformer layers of the Qwen3-VL-8B-Instruct backbone.

\begin{figure}[h]
\centering
\includegraphics[width=0.6\linewidth]{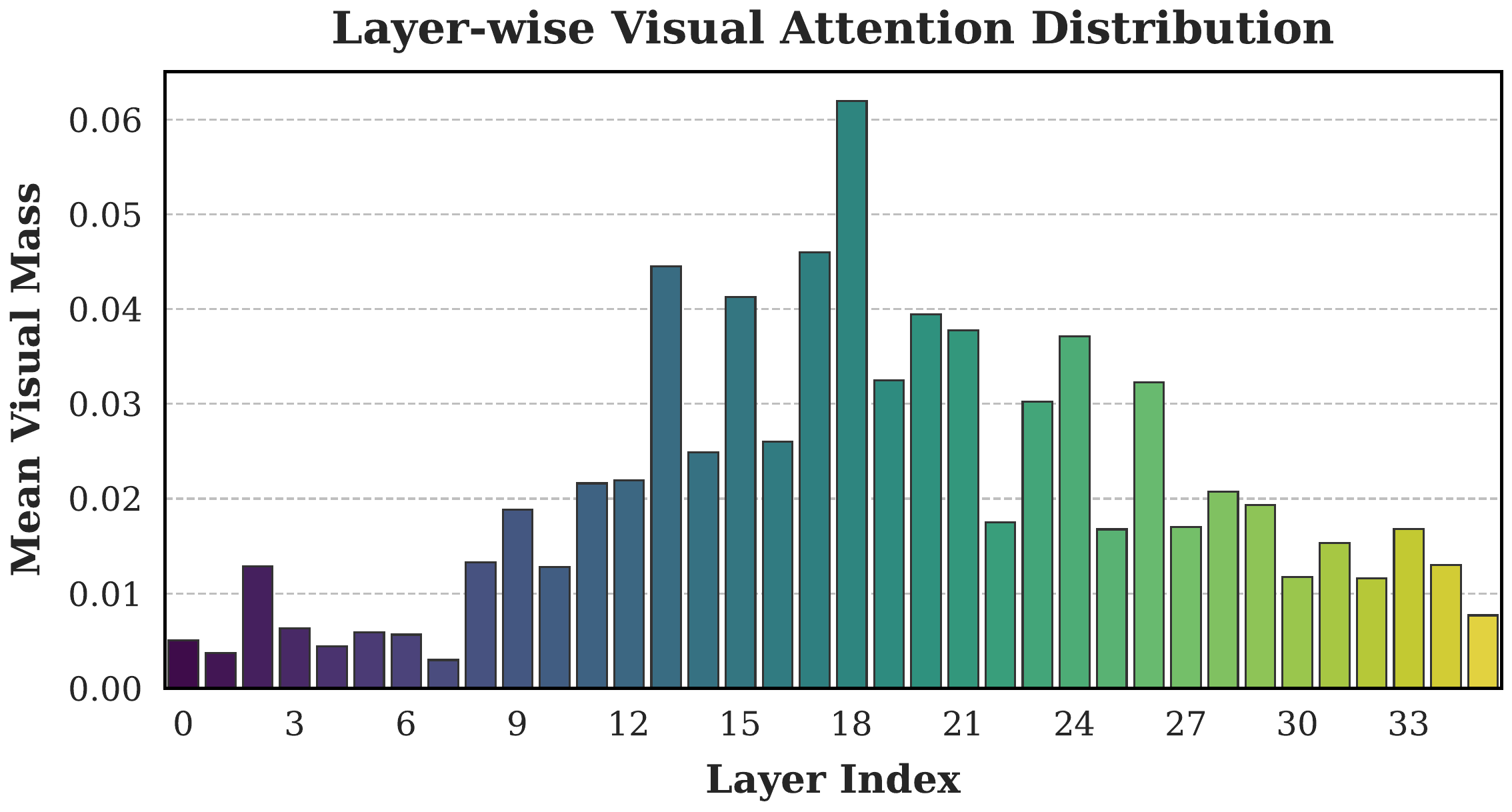} 
\caption{\textbf{Layer-wise Distribution of Mean Visual Attention Mass.} The bar chart visualizes the aggregate attention weight assigned to visual tokens at each layer. We observe a characteristic "Rise-Peak-Decay" pattern, guiding our data-driven injection strategies.}
\label{fig:layer_profile}
\end{figure}

\paragraph{Methodology.}
We define the \textit{Mean Visual Mass} for each layer by averaging the visual attention mass (defined in Eq.~\ref{eq:standard_attn}) over all samples and generation steps. Figure~\ref{fig:layer_profile} illustrates this distribution. Based on this profile, we formulated three selection strategies:

\begin{enumerate}
\item \textbf{Peak Attention (Reinforcement Strategy):} This strategy hypothesizes that PVM should bolster layers where the model is already actively seeking visual information. We selected the top-3 layers with the highest absolute magnitude:
\begin{equation}
\mathcal{L}_{\text{peak}} = \text{Top-3}_{\ell}(\bar{\Omega}_{\mathcal{V}}^\ell) \rightarrow {13, 17, 18}
\end{equation}
As seen in Figure~\ref{fig:layer_profile}, Layer 18 represents the global maximum, with Layers 13 and 17 forming secondary peaks.

\item \textbf{Max Decay (Compensation Strategy):} This strategy aims to ``rescue'' the visual signal at the precise moments it suffers the most severe attenuation. We calculated the discrete derivative of the attention mass, $\Delta^\ell = \bar{\Omega}_{\mathcal{V}}^{\ell-1} - \bar{\Omega}_{\mathcal{V}}^\ell$. We selected layers corresponding to the largest drops (positive $\Delta^\ell$):
\begin{equation}
    \mathcal{L}_{\text{decay}} = \text{Top-3}_{\ell}(\Delta^\ell) \rightarrow \{14, 19, 22\}
\end{equation}
Referring to Figure~\ref{fig:layer_profile}, significant drops are observable immediately after the peaks: from Layer 13 to 14, 18 to 19, and 21 to 22. These layers represent ``bottlenecks'' where visual context is rapidly discarded in favor of textual processing.

\item \textbf{Strided Strategy (Global Coverage):} 
Observation of Figure~\ref{fig:layer_profile} reveals that layers 0--7 possess negligible visual mass, serving primarily as a textual warm-up phase. Significant visual processing initiates at Layer 8. 
Therefore, we adopted a uniform strided placement starting from this active onset: $\mathcal{L}_{\text{stride}} = \{8, 16, 24\}$. This strategy avoids clustering modules in the middle and ensures visual injection is distributed evenly across the shallow, middle, and deep reasoning blocks.
\end{enumerate}

\section{Impact of Latent Dimension Size}
\label{app:ablation_dim}

In this section, we analyze the sensitivity of the model performance to the PVM bottleneck size ($d'$). This hyperparameter controls the capacity of the Persistent Visual Memory and the compression rate of the visual features.

\begin{table}[!htbp]
\centering
\caption{\textbf{Ablation on Latent Dimension.} We analyze the impact of the PVM bottleneck size ($d'$). A dimension of 512 achieves optimal performance and parameter efficiency.}
\label{tab:ablation_dim}
\scriptsize
\setlength\tabcolsep{5pt} 
\begin{tabularx}{0.6\linewidth}{Y|YY|Y}
\toprule
\textbf{Latent Dim ($d'$)} & \textbf{General} & \textbf{Reasoning} & \textbf{Avg.} \\
\midrule
\rowcolor{LightCyan} \textbf{512 (Ours)} & \textbf{75.2} & \textbf{63.0} & \textbf{70.6} \\
1024  & 73.8 & 61.4 & 69.2 \\
2048  & 74.7 & 61.6 & 69.8 \\
\bottomrule
\end{tabularx}
\end{table}

As shown in Table~\ref{tab:ablation_dim}, increasing the latent dimension to 1024 or 2048 does not lead to performance improvements; in fact, it results in a slight regression compared to our default setting of $d'=512$. 

\paragraph{Analysis of Data-Capacity Mismatch.}
We attribute this phenomenon to the constraints of the available fine-tuning data scale. Expanding the latent dimension significantly increases the parameter count of PVM. 
According to scaling laws~\cite{kaplan2020scalinglawsneurallanguage, hoffmann2022trainingcomputeoptimallargelanguage}, larger parameter spaces require proportionally larger supervision signals to be effectively optimized. 
Given the size of our current SFT dataset, the model likely struggles to fully saturate the capacity of higher-dimensional bottlenecks, potentially leading to optimization difficulties or overfitting to noise. 
Consequently, $d'=512$ provides the optimal balance, offering sufficient representational capacity for visual retrieval while remaining compact enough to be robustly trained with the available data.

\section{Iso-Parameter Control Analysis}
\label{app:iso_parameter}

To rigorously verify that the performance improvements of our Persistent Visual Memory (PVM) stem from the active visual retrieval mechanism rather than a mere increase in parameter capacity, we conduct an iso-parameter control experiment.

\paragraph{Baseline Design.}
We design a parallel MLP baseline that exactly matches the parameter count of the integrated PVM modules. Crucially, this variant removes the visual cross-attention mechanism, meaning it cannot retrieve raw visual signals and relies solely on processed hidden states. To ensure a strictly fair comparison, this iso-parameter baseline is trained from scratch using the identical two-stage pipeline (SFT followed by GRPO) as our default PVM model.

\begin{table}[!htbp]
\centering
\caption{\textbf{Iso-Parameter Control Results.} Comparison between our PVM model and an iso-parameter MLP baseline across 8 complex reasoning benchmarks. Both models share the exact same parameter count and are trained under the identical SFT+GRPO pipeline.}
\label{tab:iso_parameter_control}
\scriptsize
\setlength\tabcolsep{3.5pt} 
\begin{tabular}{l|cccccccc|c}
\toprule
\textbf{Model Setup} & \textbf{MMMU} & \textbf{MMBench\_CN} & \textbf{MMBench\_EN} & \textbf{MMStar} & \textbf{MMT} & \textbf{MathVerse} & \textbf{MathVision} & \textbf{AI2D} & \textbf{Avg.} \\
\midrule
SFT + GRPO & 60.7 & 88.8 & 87.9 & 68.6 & 54.2 & 58.5 & 48.0 & 79.6 & 68.3 \\
MLP (SFT+GRPO) & 63.3 & 88.8 & 88.7 & 70.0 & 55.0 & 58.0 & 48.7 & 79.4 & 69.0 \\
\rowcolor{LightCyan} \textbf{PVM-8B (SFT+GRPO)} & \textbf{67.3} & \textbf{91.2} & \textbf{89.4} & \textbf{71.6} & \textbf{58.3} & \textbf{59.8} & \textbf{51.3} & \textbf{82.8} & \textbf{71.5} \\
\bottomrule
\end{tabular}
\end{table}

\paragraph{Results and Analysis.}
As shown in Table~\ref{tab:iso_parameter_control}, despite possessing the exact same parameter capacity and undergoing identical RL optimization, the MLP baseline consistently underperforms \textbf{PVM-8B (SFT+GRPO)} across all 8 evaluation benchmarks, where PVM achieves a $2.5\%$ higher average score.

This comprehensive comparison confirms that the added parameters in our architecture do not merely act as a regularizer or a passive capacity booster for the language backbone. Instead, the substantial performance gains are fundamentally attributed to the PVM's ability to dynamically retrieve and integrate preserved visual signals during the reasoning process.

\section{Computational Overhead Analysis}
\label{app:overhead}

To strictly quantify the inference cost introduced by the Persistent Visual Memory (PVM) module, we conducted a benchmarking study comparing the PVM-enhanced model against the standard Qwen3-VL-8B-Instruct baseline.

\paragraph{Benchmarking Protocol.}
We implemented a low-level profiling script using the \texttt{PyTorch} framework and \texttt{transformers} library. To simulate a realistic interactive environment, we measured performance using a \textbf{streaming generation} setup with a batch size of 1.
The testing configuration is as follows:
\begin{itemize}
    \item \textbf{Environment:} A single NVIDIA H200 GPU (141GB VRAM).
    \item \textbf{Precision:} \texttt{bfloat16} mixed precision with \texttt{FlashAttention-2} enabled.
    \item \textbf{Generation Strategy:} Greedy decoding (\texttt{do\_sample=False}) to ensure deterministic latency measurements.
    \item \textbf{Warm-up:} A dry-run generation (5 tokens) was executed prior to measurement to eliminate kernel initialization overhead and JIT compilation artifacts.
\end{itemize}

\paragraph{Metrics Definition.}
We focus on two critical latency metrics derived from the token streaming timestamps $t_0, t_1, \dots, t_N$, where $t_0$ is the start time and $t_i$ is the arrival time of the $i$-th token:
\begin{itemize}
    \item \textbf{Time Per Output Token (TPOT):} Measures the decoding latency per step, strictly excluding the prefill phase. This represents the autoregressive generation speed: 
    \begin{equation}
        \text{TPOT} = \frac{t_N - t_1}{N - 1}
    \end{equation}
    \item \textbf{Throughput:} Defined as the inverse of TPOT ($1/\text{TPOT}$), representing the generation speed in tokens per second (tokens/s).
\end{itemize}

\paragraph{Quantitative Results.}
Table~\ref{tab:overhead_detailed} summarizes the performance comparison. 
The inclusion of PVM introduces a fixed computational graph expansion due to the parallel branch (Projection $\rightarrow$ Cross-Attention $\rightarrow$ Fusion). 
However, due to our parameter-efficient bottleneck design, this overhead is minimal.
The TPOT increases by only \textbf{1.18 ms}, resulting in a throughput reduction of 4.6\%. 
This confirms that PVM provides a highly favorable trade-off, delivering significant gains (as shown in Section~\ref{subsec:main_results}) with negligible impact on real-time inference capability.

\begin{table}[h]
\centering
\caption{\textbf{Inference Speed Comparison.} Evaluated on a single H200 GPU with bfloat16 precision. PVM maintains high-speed generation with marginal latency overhead.}
\label{tab:overhead_detailed}
\scriptsize
\renewcommand{\arraystretch}{1.2}
\setlength\tabcolsep{10pt}
\begin{tabular}{l|c|c|c}
\toprule
\textbf{Metric} & \textbf{Baseline (Qwen3-VL)} & \textbf{Ours (PVM-Enhanced)} & \textbf{Delta} \\
\midrule
Decoding Throughput & 41.18 tokens/s & 39.28 tokens/s & -4.61\% \\
Time Per Output Token (TPOT) & 24.28 ms & 25.46 ms & +1.18 ms \\
\bottomrule
\end{tabular}
\end{table}

\section{PVM Inference Algorithm}
\label{app:algorithm}

In this section, we provide the pseudocode for the forward pass of a Transformer decoder block integrated with the Persistent Visual Memory (PVM) module. Algorithm~\ref{alg:pvm_forward} details the exact computational flow, highlighting the parallel bifurcation between the static reasoning path (FFN) and the dynamic visual retrieval path (PVM).

The algorithm explicitly demonstrates two critical mechanisms designed to preserve signal fidelity:
\begin{itemize}
    \item \textbf{Latent Compression:} The projection of queries and keys into a concentrated latent space ($d'$) to distill core visual semantics and filter redundancy.
    \item \textbf{Visual Silencing:} The application of the mask $\mathcal{M}_{\mathrm{txt}}$ to ensure that only textual tokens trigger visual retrieval, preventing redundant self-referencing by visual tokens.
\end{itemize}

\begin{algorithm}[!h]
  \caption{Forward Pass of PVM-Enhanced Transformer Block}
  \label{alg:pvm_forward}
  \begin{algorithmic}[1]
    \Require Hidden states $\mathbf{x} \in \mathbb{R}^{L \times d}$ (Sequence length $L$, Model dim $d$)
    \Require Visual Context $\mathbf{V}_{\mathrm{img}} \in \mathbb{R}^{M \times d}$ (Visual tokens $M$)
    \Require Visual Silencing Mask $\mathcal{M}_{\mathrm{txt}} \in \{0, 1\}^L$ (1 for text, 0 for image)
    \Require Learnable Gate $\alpha$ initialized to 0
    
    \Statex \textcolor{gray}{// --- Stage 1: Standard Self-Attention ---}
    \State $\mathbf{x}_{\mathrm{norm}} \leftarrow \mathrm{RMSNorm}(\mathbf{x})$
    \State $\mathbf{h}_{\mathrm{attn}} \leftarrow \mathrm{MHSA}(\text{Query}=\mathbf{x}_{\mathrm{norm}}, \text{KV}=\text{Cache} \cup \mathbf{x}_{\mathrm{norm}})$
    \State $\mathbf{x} \leftarrow \mathbf{x} + \mathbf{h}_{\mathrm{attn}}$ \Comment{Residual Connection}
    
    \Statex
    \Statex \textcolor{gray}{// --- Stage 2: Parallel Bifurcation ---}
    \State $\mathbf{x}_{\mathrm{norm}} \leftarrow \mathrm{RMSNorm}(\mathbf{x})$
    
    \State \textbf{Path A: Static Reasoning (Frozen FFN)}
    \State \quad $\mathbf{h}_{\mathrm{ffn}} \leftarrow \mathrm{FFN}(\mathbf{x}_{\mathrm{norm}})$
    
    \State \textbf{Path B: Active Visual Retrieval (PVM)}
    \Statex \quad \textcolor{gray}{// B1. Compression to Latent Space ($d'$)}
    \State \quad $\mathbf{q}_{\mathrm{lat}} \leftarrow \mathbf{x}_{\mathrm{norm}} \mathbf{W}_{\mathrm{down}}^{\mathrm{txt}}$
    \State \quad $\mathbf{K}_{\mathrm{lat}}, \mathbf{V}_{\mathrm{lat}} \leftarrow \mathbf{V}_{\mathrm{img}} \mathbf{W}_{\mathrm{down}}^{\mathrm{vis}}$
    
    \Statex \quad \textcolor{gray}{// B2. Gated Cross-Attention \& Latent FFN}
    \State \quad $\mathbf{h}_{\mathrm{cross}} \leftarrow \mathrm{CrossAttn}(\text{Q}=\mathbf{q}_{\mathrm{lat}}, \text{K}=\mathbf{K}_{\mathrm{lat}}, \text{V}=\mathbf{V}_{\mathrm{lat}})$
    \State \quad $\mathbf{h}_{\mathrm{lat}} \leftarrow \mathbf{h}_{\mathrm{cross}} + \mathrm{FFN}_{\mathrm{lat}}(\mathrm{RMSNorm}(\mathbf{h}_{\mathrm{cross}}))$
    
    \Statex \quad \textcolor{gray}{// B3. Restoration \& Gating}
    \State \quad $\mathbf{h}_{\mathrm{pvm}} \leftarrow \mathbf{h}_{\mathrm{lat}} \mathbf{W}_{\mathrm{up}}$
    \State \quad $\mathbf{injection} \leftarrow (\alpha \cdot \mathbf{h}_{\mathrm{pvm}}) \odot \mathcal{M}_{\mathrm{txt}}$ \Comment{Apply Visual Silencing}
    
    \Statex
    \Statex \textcolor{gray}{// --- Stage 3: Unified Fusion ---}
    \State $\mathbf{y} \leftarrow \mathbf{x} + \mathbf{h}_{\mathrm{ffn}} + \mathbf{injection}$
    
    \Statex
    \Ensure Output hidden states $\mathbf{y} \in \mathbb{R}^{L \times d}$
  \end{algorithmic}
\end{algorithm}

\section{Prompt Templates}
\label{sec:appendix_prompt}

In this section, we provide the exact prompt templates utilized in our empirical analysis and training phases.

\paragraph{Visual Stress Test.}
To empirically verify the \textit{Visual Signal Dilution} phenomenon (Section~\ref{subsec:empirical}), we designed the \textbf{``Blind Painter''} prompt. As shown in the \textit{``Blind Painter'' Test Template}, this directive is engineered to be purposefully demanding. By explicitly requesting ``every single brushstroke'' and a ``monolithic wall of text,'' we force the model to generate extended sequences that maintain a continuous, high-intensity dependency on the visual input. This prevents the model from relying on generic hallucinations and isolates its ability to sustain visual attention over deep generation.

\paragraph{Structured Reasoning.}
Following the setting in OpenMMReasoner-SFT-874K~\cite{zhang2025openmmreasoner}, we employ a structured system prompt shown in the \textit{Reasoning Template} for the Policy Refinement stage and the evaluation. This format enforces a Chain-of-Thought (CoT) process, requiring the model to explicitly generate an internal monologue within \texttt{<think>} tags before producing the final answer.

\definecolor{rliableblue}{HTML}{77AADD}
\newcommand{\placeholder}[1]{\textcolor{red}{\{#1\}}}

\begin{tcolorbox}[colback=rliableblue!10!white,colframe=black,boxrule=1pt,boxsep=2pt,top=3pt,bottom=3pt,left=2pt,right=2pt]
\begin{center}
\textbf{``Blind Painter'' Test Template}
\end{center}
\textbf{USER:} \\ 
Describe this image for a blind painter who needs to recreate it perfectly without ever seeing it.
You must describe every single brushstroke, every strand of hair, every dust mote, and every subtle shift in light.
Start from the macroscopic composition and drill down into the microscopic details.
Describe the exact curvature of lines, the precise hex codes of colors, and the interplay of light and shadow.
Do not leave any pixel unaccounted for. Write a monolithic wall of text describing this image.
Do not stop until you reach the token limit.
\end{tcolorbox}

\begin{tcolorbox}[colback=rliableblue!10!white,colframe=black,boxrule=1pt,boxsep=2pt,top=3pt,bottom=3pt,left=2pt,right=2pt]
\begin{center}
\textbf{Reasoning Template}
\end{center}
\textbf{SYSTEM:} \\ You are a helpful assistant. When the user asks a question, your response must include two parts: first, the reasoning process enclosed in {\textless think\textgreater...\textless/think\textgreater} tags, then the final answer enclosed in {\textless answer\textgreater...\textless/answer\textgreater} tags. Please provide a clear, concise response within {\textless answer\textgreater \textless/answer\textgreater} tags that directly addresses the question. \\
\\
\textbf{USER:} \\ \placeholder{question}
\end{tcolorbox}

\section{Societal Impacts}
\label{app:societal_impacts}

Our work introduces Persistent Visual Memory (PVM) to improve the visual fidelity of Large Vision-Language Models (LVLMs) during extended generation. By structurally mitigating visual hallucinations, PVM positively contributes to the reliability of LVLMs in applications like scientific reasoning and visual assistants. As a foundational architectural improvement, our method does not inherently introduce new or specific societal risks beyond those already associated with base LVLMs. However, like any advanced generative AI, models equipped with PVM could still be misused to generate convincing deceptive content or inherit biases from their pre-training data. Addressing these general risks relies on standard safety alignment and responsible deployment practices for the backbone models.

\section{Limitations and Future Work}
\label{app:limitations}

While Persistent Visual Memory (PVM) effectively mitigates visual signal dilution, we note a few directions for future exploration. First, our empirical evaluation currently focuses on the representative Qwen3-VL (4B and 8B) series. Although PVM's parallel design is theoretically backbone-agnostic, validating its efficacy across a broader range of LVLM architectures and larger parameter scales is a natural next step. Second, as discussed in Appendix~\ref{sec:appendix_approximation}, our theoretical guarantees utilize a fixed local query assumption to rigorously isolate the dilution effect. Modeling the exact global dynamics of query drift over extremely long horizons could provide further mechanistic insights. Finally, we focus on mitigating dilution for static visual contexts in this work; extending the persistent memory mechanism to dynamic streaming inputs, such as long-form video understanding, presents an exciting avenue for future research.




\end{document}